\def\eqref#1{equation~\ref{#1}}
\def\1{\bm{1}}
\DeclareMathAlphabet{\mathsfit}{\encodingdefault}{\sfdefault}{m}{sl}
\SetMathAlphabet{\mathsfit}{bold}{\encodingdefault}{\sfdefault}{bx}{n}
\DeclareMathOperator*{\argmax}{arg\,max}
\definecolor{olivegreen}{rgb}{0,0.6,0}
\definecolor{mymauve}{HTML}{E60B42}
\definecolor{echopurple}{HTML}{9400D1}
\definecolor{camdrk}{HTML}{0099CC}
\definecolor{darkgry}{HTML}{333333}
\definecolor{echoblue}{HTML}{0099CC}
\definecolor{echonavy}{HTML}{0054B2}
\definecolor{mygold}{HTML}{B8860B}
\def\PYG@reset{\let\PYG@it=\relax \let\PYG@bf=\relax%
    \let\PYG@ul=\relax \let\PYG@tc=\relax%
    \let\PYG@bc=\relax \let\PYG@ff=\relax}
\def\PYG@tok#1{\csname PYG@tok@#1\endcsname}
\def\PYG@toks#1+{\ifx\relax#1\empty\else%
    \PYG@tok{#1}\expandafter\PYG@toks\fi}
\def\PYG@do#1{\PYG@bc{\PYG@tc{\PYG@ul{%
    \PYG@it{\PYG@bf{\PYG@ff{#1}}}}}}}
\def\PYG#1#2{\PYG@reset\PYG@toks#1+\relax+\PYG@do{#2}}
\theoremstyle{plain}
\newtheorem{theorem}{Theorem}[section]
\newtheorem{proposition}[theorem]{Proposition}
\newtheorem{lemma}[theorem]{Lemma}
\newtheorem{corollary}[theorem]{Corollary}
\theoremstyle{definition}
\theoremstyle{remark}
\newtheorem{remark}[theorem]{Remark}
\icmltitlerunning{\texttt{Softmax} is not Enough (for Sharp Size Generalisation)}
\begin{document}

\twocolumn[
\icmltitle{\texttt{Softmax} is not Enough (for Sharp Size Generalisation)}



\icmlsetsymbol{equal}{*}

\begin{icmlauthorlist}
\icmlauthor{Petar Veli\v{c}kovi\'{c}}{gdm}
\icmlauthor{Christos Perivolaropoulos}{gdm}
\icmlauthor{Federico Barbero}{oxf,equal}
\icmlauthor{Razvan Pascanu}{gdm}
\end{icmlauthorlist}

\icmlaffiliation{gdm}{Google DeepMind}
\icmlaffiliation{oxf}{University of Oxford}

\icmlcorrespondingauthor{Petar Veli\v{c}kovi\'{c}}{petarv@google.com}

\icmlkeywords{Machine Learning, ICML}

\vskip 0.3in
]



\printAffiliationsAndNotice{$^\ast$Work performed while the author was at Google DeepMind.}  

\begin{abstract}
A key property of reasoning systems is the ability to make \emph{sharp} decisions on their input data. For contemporary AI systems, a key carrier of sharp behaviour is the \texttt{softmax} function, with its capability to perform differentiable query-key lookups. It is a common belief that the predictive power of networks leveraging \texttt{softmax} arises from ``circuits'' which sharply perform certain kinds of computations consistently across many diverse inputs. However, for these circuits to be robust, they would need to generalise well to \emph{arbitrary} valid inputs. In this paper, we dispel this myth: even for tasks as simple as finding the maximum key, any learned circuitry \emph{must disperse} as the number of items grows at test time. We attribute this to a fundamental limitation of the \texttt{softmax} function to robustly approximate sharp functions with increasing problem size, prove this phenomenon theoretically, and propose \emph{adaptive temperature} as an ad-hoc technique for improving the sharpness of \texttt{softmax} at inference time.
\end{abstract}

\section{Motivation}\setcounter{footnote}{0}

It is no understatement to say that the $\mathtt{softmax}_\theta : \mathbb{R}^n\rightarrow[0, 1]^n$ function\footnote{Strictly speaking, the proper name for this function should be \texttt{soft\textbf{arg}max}. We choose to retain the terminology introduced by \citet{bridle1989training}, primarily for reasons of alignment with modern deep learning frameworks.}:
\begin{equation}
    \mathtt{softmax}_\theta(\mathbf{e}) = \begin{bmatrix} \dfrac{\exp (e_1 / \theta)}{\sum_{k} \exp (e_k / \theta)} & \dots & \dfrac{\exp (e_n / \theta)}{\sum_{k} \exp (e_k / \theta)} \end{bmatrix}
\end{equation}
is one of the most fundamental functions in contemporary artificial intelligence systems.

\begin{figure}
    \centering
    \includegraphics[width=\linewidth]{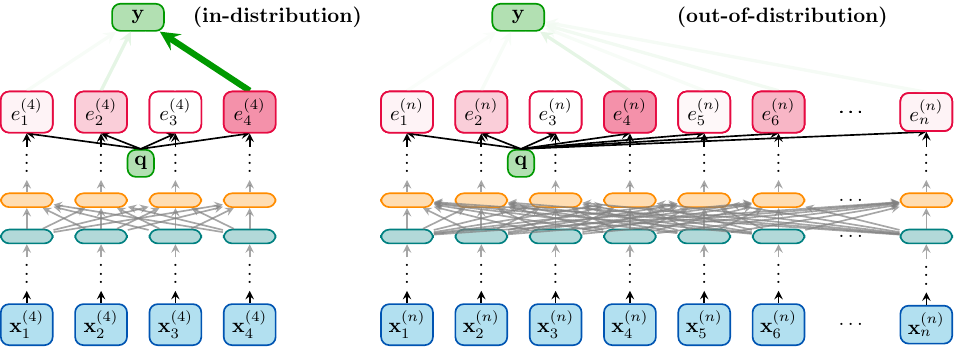}
    \caption{Illustration of Theorem \ref{thm:disperse}, one of our key results. Assuming a tokenised input from a fixed vocabulary and a non-zero temperature, for every \texttt{softmax} attention head inside an architecture comprising only \textcolor{teal}{\bf MLPs} and \textcolor{mygold}{\bf \texttt{softmax} self-attention layers}, it must hold that, given sufficiently many \textcolor{echonavy}{\bf tokens}, its \textcolor{olivegreen}{\bf attention coefficients} will \emph{disperse}, even if they were sharp in-distribution.}
    \label{fig:disperse}
\end{figure}

The role of \texttt{softmax} in deep learning is to convert any vector of \emph{logits}, $\mathbf{e}\in\mathbb{R}^n$, into a \emph{probability distribution}, in a form that is part of the \emph{exponential} family. Further, \texttt{softmax} allows for application of a \emph{temperature} parameter, $\theta\in\mathbb{R}$, to adjust the amount of probability mass attached to the highest logit---a concept borrowed from the Boltzmann distribution in statistical mechanics.

Initially, the primary utilisation of \texttt{softmax} in deep learning was within the final layer of \emph{classifiers}. Its influence in this domain vastly expanded after it saw use in the \emph{internal} layers---as a differentiable key-value store \citep{graves2014neural} or a mechanism for \emph{attending} over the most relevant parts of the input \citep{bahdanau2014neural}. This \emph{attentional} framing of \texttt{softmax} was critical in defining important models for sequences \citep[Transformers]{vaswani2017attention}, images \citep[ViTs]{dosovitskiy2021an} and graphs \citep[GATs]{velickovic2018graph}.

Several efforts attribute the success of \texttt{softmax} to its capability of modelling computations relevant to reasoning. This can be related to the concept of \emph{circuits} in theoretical computer science \citep{arora2009computational}. Several interpretable pieces of ``circuitry'' \citep{olah2020zoom} have already been discovered in large Transformers, primarily under the umbrella of \emph{mechanistic interpretability} \citep{elhage2021mathematical,olsson2022context,wang2022interpretability}.

Here we study the robustness of such circuitry, especially when going beyond the distribution the models are trained on---a critical regime for \emph{reasoning engines}. We find that, in spite of its many successes, \texttt{softmax} \emph{is fundamentally unable} to robustly generalise such circuits out of distribution, especially as it provably cannot approximate \textbf{sharpness} with increasing problem size (Figure \ref{fig:disperse}). 

Here we call a function taking a variable number of inputs \emph{sharp} if its output value can be expressed using only a \emph{constant} number of these inputs. For example, \texttt{max} is sharp, as its output value is equal to exactly one of its inputs' values\footnote{In practice, as \texttt{softmax} coefficients that are \emph{exactly} zero or one are often not easily obtainable, we may consider an item to be \emph{expressed} in the final answer if its coefficient is greater than some $\epsilon > 0$ (which may depend on the problem size). That said, the exact choice of $\epsilon$ does not affect the essence of our results, as we are able to bound the coefficients both above and below.}. The average function is not sharp, as its output value depends on all of its input values (with factor $1/n$ for each of the $n$ items).

\paragraph{Key Theoretical Result} 
We define sharp functions by their behaviour as their number of inputs varies. This directly motivates the \emph{size generalisation} setting we study: generalising to different amounts of inputs. Specifically, when we analyse neural networks that learn sharp functions, we assume that they are trained on problem instances containing no more than $n$ input items, and we take a particular interest in their sharpness on instances with $n' > n$ items; these are considered \emph{out-of-distribution} instances because they go beyond the maximal number of inputs the model had been prepared for. In language modelling, this setting is also known as \emph{length generalisation} \citep{anil2022exploring}; in graph machine learning, it is known as \emph{size generalisation} \citep{yehudai2021local}.

Through one of our key theoretical results (Theorem \ref{thm:disperse}), we demonstrate that modern deep learning architectures, operating over a fixed vocabulary of input tokens and leveraging the \texttt{softmax} function, are fundamentally incapable of learning functions that remain sharp under such larger instances. This is due to the fact that the coefficients emitted by the \texttt{softmax} function must \emph{disperse} as we increase the number of input items. Here by dispersing we mean that, as the number of input items grows, the coefficient attached to each individual item must decay towards zero. This makes it impossible to robustly compute functions that depend on any particular finite amount of input values, such as the aforementioned \texttt{max}, as we show in Appendix \ref{app:corollary} (Corollary \ref{cor:failure} and Remark \ref{rem:alltrans}).

We hope that our results will encourage future study of alternative attentional functions, in light of the problems we identify, especially for building reasoning engines of the future. That being said, we also believe our findings indicate ways to modify the \texttt{softmax} function to support sharpness for longer---as one simple example, we propose an \emph{adaptive temperature} mechanism for \texttt{softmax}.

\paragraph{Background} The analysis of attentional coefficients and attempting to attribute interpretable operations to them dates back to the earliest deployments of internal \texttt{softmax} layers at scale; examples include \citet[Figure 6]{graves2014neural}, \citet[Figure 3]{bahdanau2014neural}, \citet[Figures 3--5]{vaswani2017attention} and \citet[Figure 5]{qiu2018deepinf}. A strong current topic in this space analyses the self-attentional heads of Transformers \citep{voita2019analyzing,jain2019attention}.

With the rise of large language models, mechanistic interpretability has taken charge in detecting and elucidating various circuits in Transformers \citep{elhage2021mathematical}. Some prominent discoveries include induction heads \citep{olsson2022context}, indirect object identification \citep{wang2022interpretability}, multiple-choice heads \citep{lieberum2023does}, successor heads \citep{gould2023successor}, attentional sinks \citep{darcet2023vision}, comparator heads \citep{hanna2024does} and retrieval heads \citep{wu2024retrieval}. Most recently, these efforts have relied on sparse autoencoders \citep{kissane2024interpreting}.

The skills above are quite impressive and span many rules one might hope a robust reasoning system would have, and the discovered heads always appear sharp when inspected on \emph{in-distribution} samples. However, it is also known that many \emph{easy} tasks requiring sharp attention---such as finding minima---are hard to do reliably with LLMs \emph{out-of-distribution}, particularly on larger inputs \citep[Figure 6]{markeeva2024clrs}. More challenging sharp order statistic tasks, such as finding the second minimum \citep{ong2022learnable} may even be hard to learn in-distribution. The discrepancy of such results with the previous paragraph motivate our study, and formalisation of \texttt{softmax} dispersion.

Certain dispersion effects in \texttt{softmax}---e.g. as an effect of increasing temperature---are already well-understood in thermodynamics. A core contribution of our work is understanding dispersion in a setting where the \textbf{amount of logits can vary}, which is relevant for generalisation in Transformers. We are not the first to observe dispersion in this setting empirically; prior works studying the capability of Transformers to execute algorithms \citep{yan2020neural} and perform random-access lookups \citep{ebrahimi2024your} also note dispersion patterns. Our work is the first to rigorously prove these effects, directly attribute them to the \texttt{softmax} operator, as well as propose ways to improve sharpness empirically within \texttt{softmax}. The proof technique we will use to demonstrate this is inspired by \citet{barbero2024transformers}, though unlike their work, our key results apply regardless of whether the computational graph is bottlenecked or not.

\paragraph{Primer on Attentional Heads and Transformers}
Within this paper we will primarily study the use of \texttt{softmax} within \emph{self-attentional} neural network architectures, such as Transformers \citep{vaswani2017attention}. The core building block of such models is the (dot-product) \emph{attentional head}, which operates over a collection of $n$ nodes (or tokens), with features $\mathbf{x}_i^{(n)}\in\mathbb{R}^k$ for node $1\leq i\leq n$, for a given query vector $\tilde{\mathbf{q}}^{(n)}\in\mathbb{R}^k$.

First, the attentional head computes \emph{key}, (updated) \emph{query} and \emph{value} vectors via matrix multiplication:
\begin{equation}
    \mathbf{k}_i^{(n)} = \mathbf{K}\mathbf{x}_i^{(n)} \qquad \mathbf{q}^{(n)} = \mathbf{Q}\tilde{\mathbf{q}}^{(n)} \qquad  \mathbf{v}_i^{(n)} = \mathbf{V}\mathbf{x}_i^{(n)}
\end{equation}
where $\mathbf{K}, \mathbf{Q}, \mathbf{V}\in\mathbb{R}^{k'\times k}$ are learnable parameter matrices. Then, dot-products between the query and all of the key vectors are taken to compute unnormalised attentional coefficients of each item, also known as \emph{logits}, $e_i^{(n)}\in\mathbb{R}$. These coefficients are normalised using the \texttt{softmax} function to obtain \emph{attentional coefficients}, $\alpha_{i}^{(n)}\in\mathbb{R}$. Finally, the attentional coefficients are used for a weighted sum of value vectors, which represents the output of the attentional head, $\mathbf{y}^{(n)}\in\mathbb{R}^{k'}$:
\begin{equation}\label{eqn:outatt}
    e_{i}^{(n)} = \left(\mathbf{q}^{(n)}\right)^\top\mathbf{k}_i^{(n)}, \qquad \alpha_{i}^{(n)} = \mathtt{softmax}_\theta(\mathbf{e}^{(n)})_i
\end{equation}
\begin{equation*}
    \mathbf{y}^{(n)} = \sum_{1\leq i\leq n} \alpha_i^{(n)}\mathbf{v}_i^{(n)}
\end{equation*}
With regard to how attentional heads are used within Transformers, we will mainly analyse two of the most popular strategies: BERT-style \citep{devlin-etal-2019-bert} and GPT-style \citep{radford_improving_2018}. In both cases, each of the input nodes computes its own attentional output, i.e. there is one query vector per node, computed as $\mathbf{q}_i^{(n)} = \mathbf{Q}\mathbf{x}_i^{(n)}$, leading to per-node attention coefficients $\alpha_{ij}$ and outputs $\mathbf{y}_i^{(n)}$ by distributing Equation \ref{eqn:outatt} across queries. The main difference is in the choice of keys.

In BERT-style self-attention, each node's query vector attends over all of the key vectors, i.e. it is obtained by directly distributing Equation \ref{eqn:outatt} across all queries:
\begin{equation}
    e_{ij}^{(n)} = \left(\mathbf{q}_i^{(n)}\right)^\top\mathbf{k}_j^{(n)}, \qquad \alpha_{ij}^{(n)} = \mathtt{softmax}_\theta(\mathbf{e}_i^{(n)})_j
\end{equation}
\begin{equation*}
    \mathbf{y}_i^{(n)} = \sum_{1\leq j\leq n} \alpha_{ij}^{(n)}\mathbf{v}_j^{(n)}
\end{equation*}
In comparison, GPT-style attention (also known as ``causal masking'' or the decoder-only Transformer) only allows information to flow \emph{forwards}; each node's query vector may only attend to the key vectors from nodes that precede it. This yields the following modification:
\begin{equation}
    e_{ij}^{(n)} = \begin{cases} \left(\mathbf{q}_i^{(n)}\right)^\top\mathbf{k}_j^{(n)} & j \leq i\\
    -\infty & j > i\end{cases}\quad \alpha_{ij}^{(n)} = \mathtt{softmax}_\theta(\mathbf{e}_i^{(n)})_j
\end{equation}
\begin{equation*}
\mathbf{y}_i^{(n)} = \sum_{1\leq j\leq i} \alpha_{ij}^{(n)}\mathbf{v}_j^{(n)}
\end{equation*}
Our key dispersion results hold for both styles of attention---this is mainly due to the fact that all predictions made by GPT-style architectures are dependent on the \emph{final} token embedding, $\mathbf{y}_n^{(n)}$, which will attend over all items, much like any BERT head. The main difference between the two will be in qualitative effects on certain corollaries of the theory (Appendices \ref{app:corollary}--\ref{app:depthd}).




\section{Dispersion in \texttt{softmax} and Transformers}

\begin{figure}
    \centering
    \includegraphics[width=\linewidth]{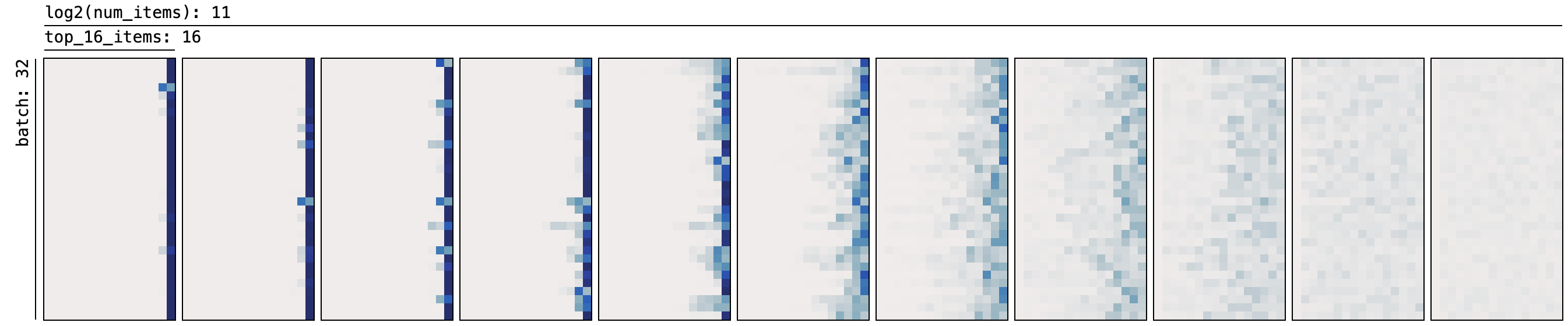}
    \caption{Visualising the attentional head for the \texttt{max} retrieval task for a batch of $32$ randomly-sampled input sets (each represented by one of the rows), over the $16$ items with largest key (columns). If the head operates correctly, it must allocate sharp attention to the rightmost item. From left to right, in each frame we \emph{double} the number of items the head has to process (starting from $16$ items).}
    \label{fig:soft_disp}
\end{figure}

\begin{figure}
  \begin{center}
    \includegraphics[width=0.49\textwidth]{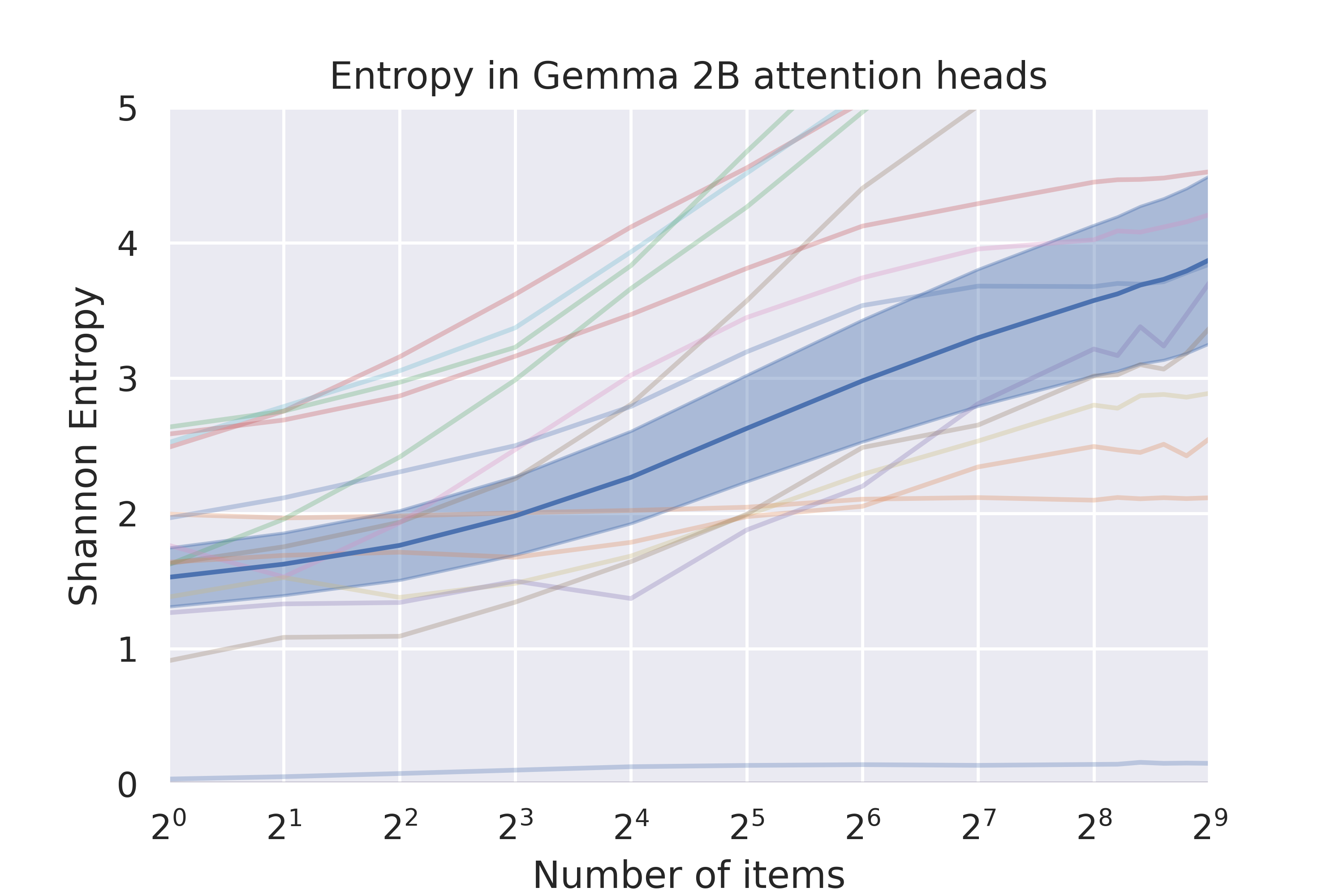}
  \end{center}
  \caption{Entropy of attention heads in the first block of Gemma 2B with prompt \texttt{"What is the maximum in the following sequence: \{seq\}? The maximum is:"} and varying the number of elements in \texttt{seq}. Each curve is one attentional head; the blue shaded curve is the mean and standard deviation across all of them.}
  \label{fig:ent_gemma}
\end{figure}
To motivate our theory, we train a simple architecture including a single attentional head to predict a feature of the \emph{maximum} item in a set. Each item's features are processed with a deep MLP before attending, and the output vector of the attention is passed to a deep MLP predictor (see Appendix \ref{app:experiment} for experimental details). We train this model using sets of $\leq 16$ items, and in Figure \ref{fig:soft_disp} we visualise the head's attentional coefficients, computed over sets of varying size at inference time.

While the model indeed attributes focus sharply and cleanly on the maximum item, this only holds true on the problem sizes that the model was trained on. As we simulate a generalisation setting where the problem size increases (without changing the value distribution), the attentional coefficients eventually disperse towards the uniform distribution.

This effects manifests in the attention heads of Transformers as well---we visualise the \emph{entropy} (a proxy for sharpness) of Gemma 2B \citep{team2024gemma}'s heads when answering a similar maximisation task in Figure \ref{fig:ent_gemma}.

In fact, we can show that this effect is \emph{inevitable} in \texttt{softmax} using the following Lemma:
\begin{lemma}[\texttt{softmax} must disperse]{\label{lem:disperse}Let $\mathbf{e}^{(n)}\in\mathbb{R}^{n}$ be a collection of $n$ logits going into the $\mathtt{softmax}_\theta$ function with temperature $\theta > 0$, bounded above and below s.t. $m\leq e^{(n)}_k\leq M$ for some choice of constants $m, M\in\mathbb{R}$. Then, as more items are added ($n\rightarrow +\infty$), it must hold that, for each item $1\leq k\leq n$, $\mathtt{softmax}_\theta(\mathbf{e}^{(n)})_k=\Theta(\frac{1}{n})$.}
\end{lemma}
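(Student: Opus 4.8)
The plan is to establish matching upper and lower bounds of order $1/n$ directly from the closed form of $\mathtt{softmax}_\theta$, using only that the logits lie in a fixed compact interval $[m, M]$ and that $\theta > 0$ is a constant that does not scale with $n$. No heavy machinery is required; the content is really about tracking where each hypothesis enters.

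First I would fix $n$ and an index $k$ and write out $\mathtt{softmax}_\theta(\mathbf{e}^{(n)})_k = \exp(e^{(n)}_k/\theta) \big/ \sum_{j=1}^{n} \exp(e^{(n)}_j/\theta)$. Since $t \mapsto \exp(t/\theta)$ is increasing for $\theta > 0$, the hypothesis $m \le e^{(n)}_j \le M$ gives $\exp(m/\theta) \le \exp(e^{(n)}_j/\theta) \le \exp(M/\theta)$ for every $j$. Summing these bounds over the $n$ terms sandwiches the denominator, $n\exp(m/\theta) \le \sum_{j} \exp(e^{(n)}_j/\theta) \le n\exp(M/\theta)$, while the same two-sided bound applies to the single numerator term. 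Combining the largest possible numerator with the smallest possible denominator, and vice versa, yields
\[
\frac{1}{n}\,e^{(m-M)/\theta} \;\le\; \mathtt{softmax}_\theta(\mathbf{e}^{(n)})_k \;\le\; \frac{1}{n}\,e^{(M-m)/\theta}.
\]
Because $e^{(m-M)/\theta}$ and $e^{(M-m)/\theta}$ are strictly positive constants depending only on $m$, $M$, $\theta$ and \emph{not} on $n$, this is exactly $\mathtt{softmax}_\theta(\mathbf{e}^{(n)})_k = \Theta(1/n)$, with explicit hidden constants; letting $n \to +\infty$ then forces every coefficient to $0$.

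There is no genuine obstacle in the argument, so in the write-up I would instead emphasise why the two hypotheses are indispensable: the endpoints $m, M$ being constants (not growing with $n$) is what keeps the ratio $e^{\pm(M-m)/\theta}$ bounded, and $\theta$ being a fixed positive constant (rather than $\theta \to 0$, or $\theta$ shrinking with $n$) is what prevents that ratio from blowing up; dropping either assumption breaks the conclusion. I would also note that the sandwich above is uniform in $k$, which is the form in which the subsequent dispersion theorem and the \texttt{max}-failure corollary will invoke it.
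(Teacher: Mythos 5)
Your proof is correct and follows essentially the same route as the paper's: bounding the numerator and denominator of the \texttt{softmax} expression via $\exp(m/\theta)\le\exp(e^{(n)}_j/\theta)\le\exp(M/\theta)$ to obtain the two-sided sandwich $\frac{1}{n}e^{(m-M)/\theta}\le\mathtt{softmax}_\theta(\mathbf{e}^{(n)})_k\le\frac{1}{n}e^{(M-m)/\theta}$, which gives $\Theta(1/n)$ since $m$, $M$, and $\theta$ are constants independent of $n$. Your added remarks on the uniformity in $k$ and on why the constancy of $m$, $M$, $\theta$ is essential are consistent with how the paper later uses the bound.
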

The computed attention coefficients hence must \textbf{disperse} for all items with increasing problem size.
\begin{proof}

Let us denote the attentional coefficient assigned to $k$ by $\alpha^{(n)}_k = \texttt{softmax}_\theta(\mathbf{e}^{(n)})_k\in [0, 1]$. Then we can bound $\alpha^{(n)}_k$ above as:
\begin{equation}
    \frac{\exp (e^{(n)}_k/\theta)}{\sum_l \exp (e^{(n)}_l/\theta)} \leq \frac{\exp (M/\theta)}{n\exp (m/\theta)} = \frac{1}{n}\exp \left(\frac{M-m}{\theta}\right)
\end{equation}
Similarly, we can bound $\alpha^{(n)}_k$ below as:
\begin{equation}
    \frac{\exp (e^{(n)}_k/\theta)}{\sum_l \exp (e^{(n)}_l/\theta)} \geq \frac{\exp (m/\theta)}{n\exp (M/\theta)} = \frac{1}{n}\exp \left(\frac{m-M}{\theta}\right)
\end{equation}
Hence, if we let $\delta = (M - m)$
\begin{equation}
   \frac{1}{n}\exp -\frac{\delta}{\theta}\leq \alpha^{(n)}_k \leq \frac{1}{n}\exp\frac{\delta}{\theta}
\end{equation}
Which implies $\alpha_k^{(n)}=\Theta(\frac{1}{n})$ as $\delta$ and $\theta$ are both constants.
\end{proof}
Lemma \ref{lem:disperse} relies on bounding logits. The difference of these bounds (the \emph{spread}, $\delta = \max_{i} e^{(n)}_i - \min_j e^{(n)}_j$) controls the rate of dispersion. In modern Transformer LLM architectures operating over a vocabulary of possible token values, we can actually bound the logits in every single attentional layer---implying that dispersion \emph{must} happen everywhere in a Transformer for sufficient problem sizes:
\begin{theorem}[\texttt{softmax} in Transformers over vocabularies must disperse]
\label{thm:disperse}
Let $\mathcal{X}\subset\mathbb{R}^m$ be a set of possible $m$-dimensional input features, and let $\mathbf{X}^{(n)}\in\mathcal{X}^n$ be a matrix of input features for $n$ items. Further, assume that input features come from a \textbf{finite} set of possible values, i.e. $|\mathcal{X}| < |\mathbb{N}|$. Let $e_j^{(n)} = (\mathbf{q}^{(n)})^\top\mathbf{k}_j^{(n)}$ where $\mathbf{q}^{(n)} = \phi(\mathbf{x}^{(n)}_1,\dots,\mathbf{x}^{(n)}_n)$ and $\mathbf{K}^{(n)} = \kappa(\mathbf{x}^{(n)}_1,\dots,\mathbf{x}^{(n)}_n)$, where $\phi : \mathcal{X}^n\rightarrow\mathbb{R}^k$ and $\kappa : \mathcal{X}^n\rightarrow\mathbb{R}^{n\times k}$ are continuous functions, each expressible as a composition of $L$ layers $g_L \circ f_L \circ \dots \circ g_1 \circ f_1$ where each layer contains a feedforward component $f_i(\mathbf{z}_1, \dots, \mathbf{z}_n)_k = f_i(\mathbf{z}_k)$ or a self-attentional component $g_i(\mathbf{z}_1, \dots, \mathbf{z}_n)_k = \sum_{1\leq l\leq n}\alpha_{lk} v_i(\mathbf{z}_l)$ where $\alpha_{lk}\in [0, 1]$ are \texttt{softmax}-normalised attention coefficients and $v_i$ is a feedforward network.
Then, for any $\theta > 0$ and $\epsilon > 0$, there must exist an $n\in\mathbb{N}$ such that $\mathtt{softmax}_\theta(\mathbf{e}^{(n)})_k < \epsilon$ for all $1\leq k\leq n$.
\end{theorem}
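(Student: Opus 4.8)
The plan is to show that the logits $e_j^{(n)}$ are \emph{uniformly bounded} --- by constants that do not depend on $n$, on the concrete input $\mathbf{X}^{(n)}\in\mathcal{X}^n$, or on $j$ --- and then to invoke Lemma \ref{lem:disperse} directly. The structural observation driving the whole argument is that, although the set of activation vectors reachable at a given depth is in general infinite (the internal softmax coefficients vary continuously), this set always remains inside a \emph{bounded} subset of the relevant $\mathbb{R}^{d}$, and boundedness is all Lemma \ref{lem:disperse} requires.

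Concretely, I would induct on the layer (component) index, proving that for each component map in the composition $g_L\circ f_L\circ\dots\circ g_1\circ f_1$ there is a compact set $S_i\subset\mathbb{R}^{d_i}$ such that, for \emph{every} $n$, every $\mathbf{X}^{(n)}\in\mathcal{X}^n$, and every position $k$, the activation at position $k$ after that component lies in $S_i$. For the base case, take $S_0$ to be $\mathcal{X}$ (closed up, if one wishes compactness): it is bounded precisely because $\mathcal{X}$ is \emph{finite}. For the inductive step there are two cases. If the component is a feedforward map $f_i$ acting pointwise, then $f_i$ is continuous and $S_{i-1}$ is compact, so $f_i(S_{i-1})$ is compact; set $S_i=f_i(S_{i-1})$. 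If the component is a self-attention map, its output at position $k$ is $\sum_{l}\alpha_{lk}\,v_i(\mathbf{z}_l)$ with $\alpha_{lk}\ge 0$ and $\sum_l\alpha_{lk}=1$ (softmax normalisation), i.e.\ a \emph{convex combination} of points of the compact set $v_i(S_{i-1})$ (compact since $v_i$ is continuous); hence the output lies in the convex hull of $v_i(S_{i-1})$, which is again compact in finite dimension. Set $S_i$ to be this convex hull. Crucially, none of the $S_i$ depend on $n$. Causal masking changes nothing: masked softmax coefficients still form a probability vector over the unmasked positions, so the convex-combination bound is unaffected (and for GPT-style models the relevant logits are those feeding the final token, which attends to all $n$ positions).

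Applying this to both networks $\phi$ and $\kappa$ shows that the query $\mathbf{q}^{(n)}$ and every key $\mathbf{k}^{(n)}_j$ lie in a fixed bounded set; let $R$ bound their Euclidean norms. By Cauchy--Schwarz, $|e^{(n)}_j|=|(\mathbf{q}^{(n)})^\top\mathbf{k}^{(n)}_j|\le R^2$ for all $j$ and all $n$, so one may take $m=-R^2$ and $M=R^2$ in Lemma \ref{lem:disperse}. The lemma then yields $\mathtt{softmax}_\theta(\mathbf{e}^{(n)})_k\le \tfrac1n\exp\!\big(\tfrac{2R^2}{\theta}\big)$ for every $k$, which is $<\epsilon$ as soon as $n>\tfrac1\epsilon\exp(2R^2/\theta)$, proving the statement.

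The main obstacle is the self-attention case of the induction: the temptation is to try to describe the set of reachable activations exactly (which is hopeless, as it is infinite and geometry-dependent), whereas the right move is the soft one --- a softmax-weighted sum is a convex combination, so the reachable set never escapes the convex hull of the value network's image, and that hull is bounded exactly because we began from a finite vocabulary and propagated compactness through every layer. Everything else (pointwise layers, the final dot product, the passage to Lemma \ref{lem:disperse}) is routine.
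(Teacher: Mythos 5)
Your proposal is correct and follows essentially the same route as the paper's own proof: propagate boundedness/compactness through the layers (continuous feedforward maps preserve compactness, self-attention outputs are convex combinations and so stay in a compact convex hull), conclude the logits are bounded independently of $n$, and invoke Lemma~\ref{lem:disperse} to force every coefficient below $\epsilon$ once $n>\frac{1}{\epsilon}\exp(\delta/\theta)$. If anything, your explicit induction with $S_i$ taken to be the convex hull of $v_i(S_{i-1})$ is a slightly more careful rendering of the paper's statement that attention outputs ``remain on the same compact space.''
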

That is, attention coefficients must \textbf{disperse} in all global Transformer heads if the input vocabulary is finite.
\begin{proof}
    Firstly, note that since $\mathcal{X}$ is a finite set of $m$-dimensional vectors, then it is also part of a \emph{compact} space spanning all convex combinations of those vectors. Then, all feedforward layers, $f_i$ and $v_i$, being continuous functions, move inputs from a compact set to another compact set. Similarly, every self-attentional layer, $g_i$, computes a convex combination of the outputs of $v_i$, and as such, if outputs of $v_i$ are on a compact space, the outputs of $g_i$ remain on the same compact space. Therefore, if the input space of $\phi$ and $\kappa$ is compact, then the output space of $\phi$ and (each row of) $\kappa$ on $\mathbb{R}^k$ must be compact as well,  regardless of the choice of $n$. Further, the dot product of two vectors $(\mathbf{q}^{(n)})^\top\mathbf{k}_j^{(n)}$ coming from compact spaces must be compact as well. Hence, the logits must be bounded by $m \leq e^{(n)}_k \leq M$ for constant $m$ and $M$. Then, letting $\delta=M-m$, we know (Lemma \ref{lem:disperse}) that $\mathtt{softmax}_\theta(\mathbf{e}^{(n)})_k\leq \frac{1}{n}\exp\left(\delta/\theta\right)$, so for all $n > \dfrac{\exp\left(\delta/\theta\right)}{\epsilon}$ this value will be below $\epsilon$.
\end{proof}
It might seem intuitive that attention head dispersion is a potentially destructive event, which forces the Transformer into misclassifying certain inputs. We prove this intuition in Appendix \ref{app:corollary}. We also discuss the rate at which dispersion occurs at various model depths in Appendix \ref{app:depthd}.

\paragraph{Practical Values of $\delta$}
It is evident that the logit spread, $\delta$, is the key controller of dispersion in our results. If we were to na\"{i}vely apply our theory (to the maximal and minimal values representable by the types used to represent floating point numbers such as \texttt{bfloat16}), it would result in fairly loose bounds which are not particularly informative.

However, it can easily be observed that \emph{empirical} values of $\delta$ do not reach those theoretical limits. For example, when feeding Gemma 2B and 7B models with an entire code sample of Gemma's Modules file\footnote{\url{https://github.com/google-deepmind/gemma/blob/main/gemma/modules.py}} ($\sim4,000$ tokens), the empirically observed values of $\delta$ across all of their attention heads are substantially more contained:
\begin{itemize}
    \item Gemma 2B: $\delta\in [2.28, 14.78]$ (average $5.69\pm 2.05$);
    \item Gemma 7B: $\delta\in [0.09, 32.74]$ (average $5.82\pm 2.61$).
\end{itemize}
There are several possible reasons why practical pre-trained models keep logit spreads contained; for example, any label noise prevents the model from learning arbitrarily large parameter values, or the derivative needed to make off-target logits arbitrarily negative may vanish when the \texttt{softmax} output is already sufficiently sharp.

\section{Adaptive Temperature}

Since we now know dispersion is inevitable, are there any ways we can leverage our theory's findings to make \texttt{softmax} sharper? One obvious constraint our theory rests on is the assumption that $\theta > 0$, i.e. that our temperature is nonzero. While zero temperature---also known as \emph{hard attention} \citep{denil2012learning,ranzato2014learning,mnih2014recurrent,pmlr-v37-xuc15}---guarantees sharpness, training large-scale Transformers with it tends to not work well in practice \citep{bica2024improving}.

What about applying $\theta=0$ to an \emph{already-trained} Transformer? We can show this is also problematic since, for any attention head where the Transformer has learnt to induce sharpness, it \emph{necessarily} did so by increasing magnitude of its weights (see Appendix \ref{app:proposition} for a proof and numerical validation):
\begin{figure*}
    \centering
    \begin{Verbatim}[commandchars=\\\{\}]
\PYG{k}{def} \PYG{n+nf}{adaptive\PYGZus{}temperature\PYGZus{}softmax}\PYG{p}{(}\PYG{n}{logits}\PYG{p}{):}
  \PYG{n}{original\PYGZus{}probs} \PYG{o}{=} \PYG{n}{jax}\PYG{o}{.}\PYG{n}{nn}\PYG{o}{.}\PYG{n}{softmax}\PYG{p}{(}\PYG{n}{logits}\PYG{p}{)}

  \PYG{n}{poly\PYGZus{}fit} \PYG{o}{=} \PYG{n}{jnp}\PYG{o}{.}\PYG{n}{array}\PYG{p}{([}\PYG{o}{\PYGZhy{}}\PYG{l+m+mf}{0.037}\PYG{p}{,} \PYG{l+m+mf}{0.481}\PYG{p}{,} \PYG{o}{\PYGZhy{}}\PYG{l+m+mf}{2.3}\PYG{p}{,} \PYG{l+m+mf}{4.917}\PYG{p}{,} \PYG{o}{\PYGZhy{}}\PYG{l+m+mf}{1.791}\PYG{p}{])}  \PYG{c+c1}{\PYGZsh{} see Figure 6}
  \PYG{n}{entropy} \PYG{o}{=} \PYG{n}{jnp}\PYG{o}{.}\PYG{n}{sum}\PYG{p}{(}\PYG{o}{\PYGZhy{}}\PYG{n}{original\PYGZus{}probs} \PYG{o}{*} \PYG{n}{jnp}\PYG{o}{.}\PYG{n}{log}\PYG{p}{(}\PYG{n}{original\PYGZus{}probs} \PYG{o}{+} \PYG{l+m+mf}{1e\PYGZhy{}9}\PYG{p}{),}
                    \PYG{n}{axis}\PYG{o}{=\PYGZhy{}}\PYG{l+m+mi}{1}\PYG{p}{,} \PYG{n}{keepdims}\PYG{o}{=}\PYG{k+kc}{True}\PYG{p}{)}  \PYG{c+c1}{\PYGZsh{} compute the Shannon entropy}
  \PYG{n}{beta} \PYG{o}{=} \PYG{n}{jnp}\PYG{o}{.}\PYG{n}{where}\PYG{p}{(}  \PYG{c+c1}{\PYGZsh{} beta = 1 / theta}
      \PYG{n}{entropy} \PYG{o}{\PYGZgt{}} \PYG{l+m+mf}{0.5}\PYG{p}{,}  \PYG{c+c1}{\PYGZsh{} don\PYGZsq{}t overcorrect low\PYGZhy{}entropy heads}
      \PYG{n}{jnp}\PYG{o}{.}\PYG{n}{maximum}\PYG{p}{(}\PYG{n}{jnp}\PYG{o}{.}\PYG{n}{polyval}\PYG{p}{(}\PYG{n}{poly\PYGZus{}fit}\PYG{p}{,} \PYG{n}{entropy}\PYG{p}{),} \PYG{l+m+mf}{1.0}\PYG{p}{),}  \PYG{c+c1}{\PYGZsh{} never increase entropy}
      \PYG{l+m+mf}{1.0}\PYG{p}{)}

  \PYG{k}{return} \PYG{n}{jax}\PYG{o}{.}\PYG{n}{nn}\PYG{o}{.}\PYG{n}{softmax}\PYG{p}{(}\PYG{n}{logits} \PYG{o}{*} \PYG{n}{beta}\PYG{p}{)}
\end{Verbatim}
\caption{Our implementation of adaptive temperature in JAX.} \label{fig:jax}
\end{figure*}

\begin{proposition}[Sharpness in Transformers necessitates large weights]\label{prop:weights}
Let $\mathbf{e}^{(n)} \in \mathbb{R}^n$ be a collection of $n$ logits, computed using dot product attention; i.e. $e^{(n)}_k = \left \langle \mathbf{Q}\mathbf{y}, \mathbf{K}\mathbf{x}_k \right \rangle$, where $\mathbf{y}\in\mathbb{R}^m$ is a query vector and $\mathbf{Q},\mathbf{K}\in\mathbb{R}^{m'\times m}$ are parameters. Let $\delta = \max\limits_{1\leq i\leq n} e^{(n)}_i - \min\limits_{1\leq j\leq n} e^{(n)}_j$ be their maximum difference. Then $\delta$ is upper bounded as $\delta \leq 2 \sigma_\mathrm{max}^{(Q)} \sigma_\mathrm{max}^{(K)} \| \mathbf{y} \| \max_{1\leq i\leq n} \| \mathbf{x}_i \|$, where $\sigma_\mathrm{max}^{(Q)}, \sigma_\mathrm{max}^{(K)}\in\mathbb{R}$ are the largest singular values of $\mathbf{Q}$ and $\mathbf{K}$.
\end{proposition}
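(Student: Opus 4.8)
The plan is to bound each individual logit in absolute value by the claimed quantity (without the factor of $2$), and then observe that the spread $\delta$, being a difference of two logits, is at most twice this uniform bound.

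First I would apply the Cauchy--Schwarz inequality to the definition $e^{(n)}_k = \left\langle \mathbf{Q}\mathbf{y}, \mathbf{K}\mathbf{x}_k\right\rangle$, giving $|e^{(n)}_k| \leq \|\mathbf{Q}\mathbf{y}\|\,\|\mathbf{K}\mathbf{x}_k\|$. Next I would control each factor using the operator (spectral) norm: for any matrix $\mathbf{M}$ and vector $\mathbf{z}$ one has $\|\mathbf{M}\mathbf{z}\| \leq \sigma_\mathrm{max}(\mathbf{M})\,\|\mathbf{z}\|$, where $\sigma_\mathrm{max}(\mathbf{M})$ is the largest singular value of $\mathbf{M}$. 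Applying this to $\mathbf{Q}$ and to $\mathbf{K}$ yields $|e^{(n)}_k| \leq \sigma_\mathrm{max}^{(Q)}\sigma_\mathrm{max}^{(K)}\,\|\mathbf{y}\|\,\|\mathbf{x}_k\|$, and then replacing $\|\mathbf{x}_k\|$ by $\max_{1\leq i\leq n}\|\mathbf{x}_i\|$ gives a bound uniform in $k$: $|e^{(n)}_k| \leq \sigma_\mathrm{max}^{(Q)}\sigma_\mathrm{max}^{(K)}\,\|\mathbf{y}\|\max_{1\leq i\leq n}\|\mathbf{x}_i\| =: B$ for every $k$.

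Finally I would conclude by noting that $\delta = \max_{i} e^{(n)}_i - \min_{j} e^{(n)}_j \leq |\max_i e^{(n)}_i| + |\min_j e^{(n)}_j| \leq 2\max_k |e^{(n)}_k| \leq 2B$, which is exactly the claimed inequality. (Alternatively one can write $\delta = e^{(n)}_{i^\star} - e^{(n)}_{j^\star}$ for the maximising and minimising indices and bound each term by $B$ directly.)

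There is no real obstacle here: the statement is a direct consequence of Cauchy--Schwarz together with the submultiplicativity of the spectral norm, and the only mild subtlety worth flagging is that the factor of $2$ is an artefact of bounding a max-minus-min by twice a maximum absolute value, and is generally not tight. The interpretive point to emphasise afterwards, connecting back to Lemma \ref{lem:disperse} and Theorem \ref{thm:disperse}, is that since $\delta$ is the quantity governing the rate of dispersion, making attention sharp (large $\delta$) on a fixed-norm input set forces $\sigma_\mathrm{max}^{(Q)}\sigma_\mathrm{max}^{(K)}$ — hence the magnitudes of the learned weight matrices — to be large.
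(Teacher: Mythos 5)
Your proposal is correct and follows essentially the same route as the paper's own proof: bound each logit in absolute value via Cauchy--Schwarz (the paper phrases it through $\cos\theta$) together with the spectral-norm inequality $\|\mathbf{M}\mathbf{z}\|\leq\sigma_\mathrm{max}(\mathbf{M})\|\mathbf{z}\|$, then obtain the factor of $2$ by differencing the upper and lower logit bounds. The only cosmetic difference is that the paper re-derives the singular-value inequality from the SVD rather than citing it, so there is nothing to add.
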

That is, the sharpness of the softmax in Transformers depends on the norm of its parameters.
Note that there is a common practice of leveraging operators such as \emph{layer normalisation} \citep{ba2016layer} extensively within Transformer architectures, which clamps $\|\mathbf{x}_i\|$ and $\|\mathbf{y}\|$ if applied right before the query-key mechanism, accentuating the impact of $\mathbf{Q}$ and $\mathbf{K}$'s singular values.

However, forcing large parameters promotes overfitting, and the likelihood that the \emph{incorrect} item gets the largest logit---see Figure \ref{fig:soft_disp}. Setting temperature to zero will then \emph{degrade} accuracy---we might prefer to make the coefficients sharper while making sure that the chosen item is not left behind. This motivates our use of \textbf{adaptive temperature}, where we vary $\theta$ depending on the \emph{entropy} in the input coefficients. Adaptive temperature can be elegantly motivated by the fact that decreasing the temperature must monotonically decrease the entropy, which is well-known in thermodynamics:
\begin{proposition}[Decreasing temperature decreases entropy]\label{prop:entropy}
    Let $\mathbf{e}^{(n)}\in\mathbb{R}^n$ be a collection of $n$ logits. Consider the Boltzmann distribution over these $n$ items, $p_i\propto\exp(-\beta e_i^{(n)})$ for $\beta\in\mathbb{R}$, and let $H = -\sum_i p_i\log p_i$ be its Shannon entropy. Then, as $\beta$'s magnitude increases, $H$ must monotonically decrease.
\end{proposition}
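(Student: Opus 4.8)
The plan is to treat $H$ as a smooth function of the inverse temperature $\beta$ and to show that $\frac{dH}{d\beta} = -\beta\,\Var(e)$, where $\Var(e)$ denotes the variance of the logit values $e_i^{(n)}$ under the Boltzmann distribution $p$. Since a variance is always nonnegative, this derivative has sign opposite to that of $\beta$, so $H$ is increasing on $\beta < 0$ and decreasing on $\beta > 0$; in particular it is monotonically nonincreasing in $|\beta|$, which is exactly the claim.

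Concretely, I would first introduce the partition function $Z(\beta) = \sum_i \exp(-\beta e_i^{(n)})$ so that $p_i = \exp(-\beta e_i^{(n)})/Z(\beta)$, and rewrite the entropy in the convenient ``free-energy'' form
\begin{equation}
H = -\sum_i p_i\log p_i = \beta\sum_i p_i e_i^{(n)} + \log Z(\beta) = \beta\,\mu(\beta) + \log Z(\beta),
\end{equation}
where $\mu(\beta) = \sum_i p_i e_i^{(n)}$ is the mean logit under $p$. Then I would record the two elementary derivatives $\frac{d}{d\beta}\log Z(\beta) = -\mu(\beta)$ and $\frac{d p_i}{d\beta} = -p_i\bigl(e_i^{(n)} - \mu(\beta)\bigr)$, the latter yielding $\frac{d\mu}{d\beta} = -\sum_i p_i e_i^{(n)}\bigl(e_i^{(n)} - \mu(\beta)\bigr) = -\Var(e)$.

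Substituting these into the derivative of the displayed expression for $H$, the two copies of $\mu(\beta)$ cancel and we are left with $\frac{dH}{d\beta} = \mu(\beta) - \beta\,\Var(e) - \mu(\beta) = -\beta\,\Var(e)$, so monotonicity in $|\beta|$ is immediate. I would additionally note that the inequality is strict unless all logits attaining positive probability coincide (i.e. all $e_i^{(n)}$ are equal), in which case $\Var(e) = 0$ and $H \equiv \log n$ is constant.

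There is essentially no serious obstacle here: this is the standard statistical-mechanics fact that Shannon entropy is maximised at infinite temperature, and the only real work is careful bookkeeping of the chain rule for $p_i$ together with recognising the resulting sum as a variance. The two small points to be careful about are stating the conclusion for \emph{both} signs of $\beta$ (the proposition is phrased in terms of the magnitude of $\beta$) and flagging the degenerate all-equal-logits case where the monotonicity is non-strict.
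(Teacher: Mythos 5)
Your proposal is correct and follows essentially the same route as the paper's proof: both rewrite the entropy in the free-energy form $H = \log Z + \beta\,\mathbb{E}[e]$ and differentiate to obtain $\frac{dH}{d\beta} = -\beta\,\mathrm{Var}(e)$, with your computation of $\frac{d\mu}{d\beta}$ via $\frac{dp_i}{d\beta}$ being just a repackaging of the paper's identity $\frac{d^2}{d\beta^2}\log Z = \mathrm{Var}(e)$. Your extra remark about strictness failing only when all logits coincide is a small but welcome addition beyond what the paper states.
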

\begin{figure}
\centering
\includegraphics[width=\linewidth]{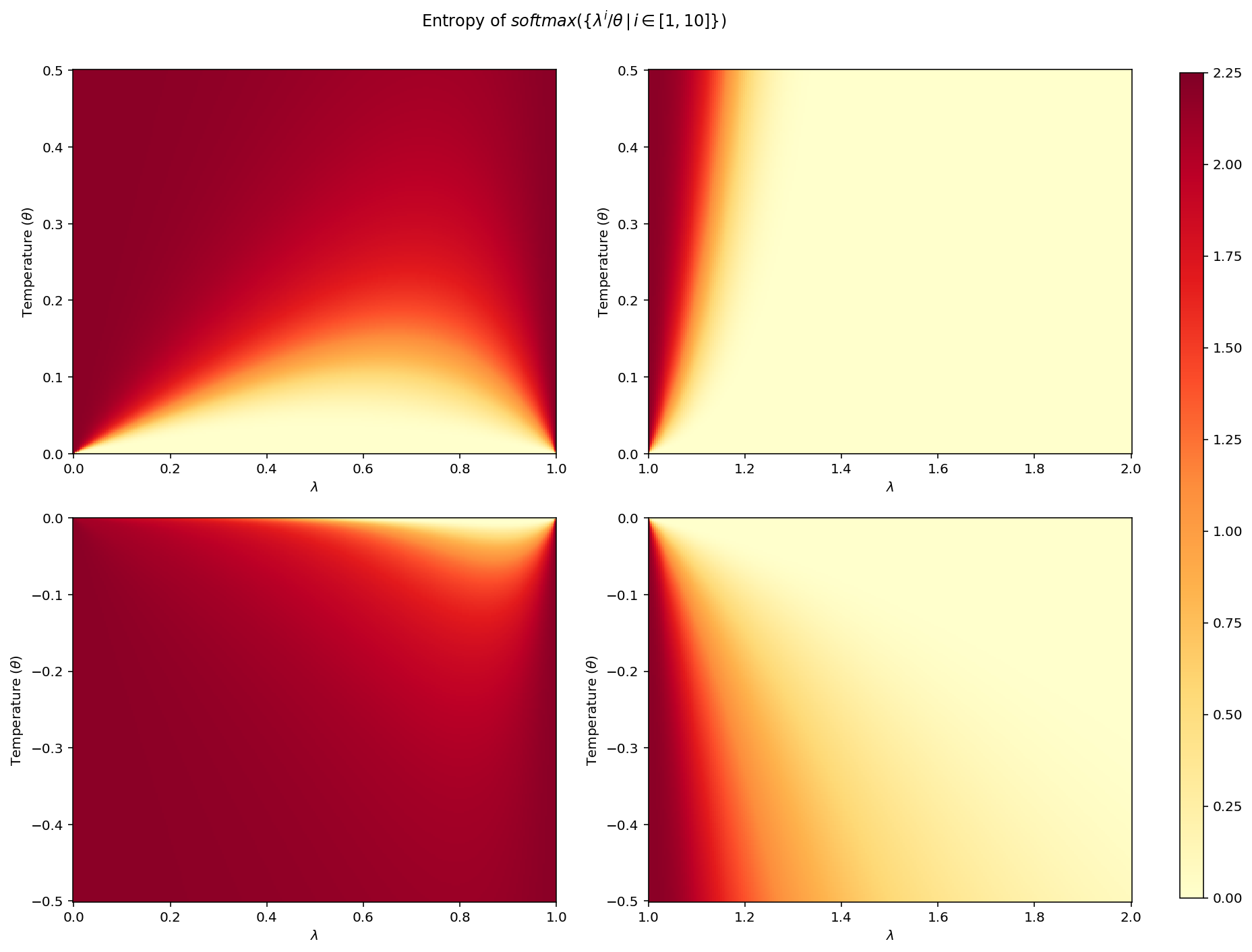}
  \caption{Entropy of $\mathtt{softmax}_\theta(\mathbf{a})$ for 10 elements of a power series $a_i = \lambda^i$, split into four regions depending on range of $(\lambda, \theta)$. Degenerate cases: near $\lambda = 0$ and $\lambda = 1$ (all logits equal).}
  \label{fig:emp_entropy}
\end{figure}
\begin{figure}
    \includegraphics[width=0.49\textwidth]{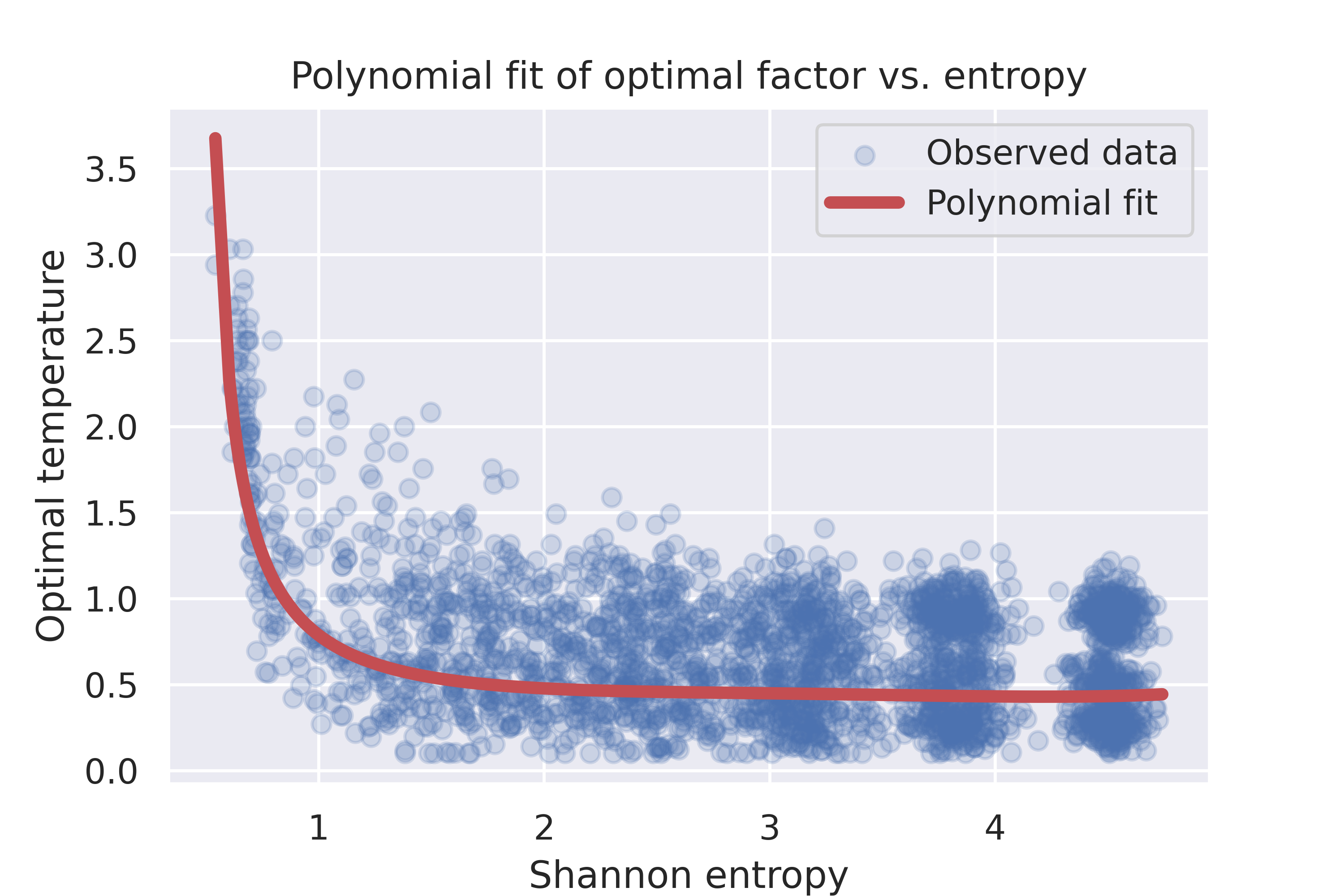}
    \caption{The polynomial fit used to derive our adaptive formula for $\theta$ as a function of the Shannon entropy, $H$. The fit degree-4 function was $\theta \approx 1 / (- 1.791 + 4.917H - 2.3H^2 + 0.481H^3 - 0.037H^4)$. We do not apply the correction to $\theta$ if predicted greater than $1$.}
    \label{fig:polyfit}
\end{figure}
Thus, since $\beta\propto\frac{1}{\theta}$ where $\theta$ is the temperature in $\mathtt{softmax}_\theta$, decreasing the temperature must monotonically decrease the entropy. We provide a full proof in Appendix \ref{app:entropy}. To supplement Proposition \ref{prop:entropy} empirically, we also provide---in Figure \ref{fig:emp_entropy}---a visualisation of how the Shannon entropy varies with temperature, for a $10$-logit input with varying spread between the logits.

To compute the approximate temperature value as a function of entropy, we generate a dataset of inputs to our model where the maximal items do not obtain the highest logit (see Appendix \ref{app:scheme} for a detailed overview). For each such input, we find the ``optimal'' value of $\theta$ that would maximise its probability. Then we fit an inverse degree-4 polynomial to this data---see Figure \ref{fig:polyfit}---and use it to predict temperatures to use at inference time. Note we do not wish to increase entropy; as such, we do not correct $\theta$ to values greater than $1$.

The JAX \citep{jax2018github} implementation of our adaptive-$\theta$ \texttt{softmax} is provided in Figure \ref{fig:jax}, and we use it as a drop-in replacement for \texttt{jax.nn.softmax} in all of our experiments.

While this approach requires two calls to \texttt{jax.nn.softmax} in place of one, as well as computing several additional intermediate tensors, we are able to implement it in a way that allows the entropy correction computation to be fully \emph{streamed}, and hence compatible with efficient, scalable approaches like Flash Attention \citep{dao2022flashattention} that uses $O(n)$ rather than $O(n^2)$ memory to compute attention. We provide the derivation of our streamed algorithm in Appendix \ref{app:stream}.

Note we are not the first to propose dynamically adapting temperature---\citet{neumann2018relaxed,radford2021learning} do this in the classification layer (and hence do not have to handle an ever-increasing amount of items), whereas \citet{chiang2022overcoming,cao2024mvsformer} perform it over intermediate attentional heads, but in a way that only depends on problem size (e.g. multiplying logits by $\log n$), hence not taking into account initial logit sharpness. It is important to also call out AERO \citep{jha2024aero}, a method which introduces \emph{learnable} temperature, and Entropix \citep{xjdr2024entropix}, a notable library for (var)entropy-based LLM sampling.

\section{Experimental Results}

To validate the utility of our proposed adaptive temperature scheme, we evaluate it on both our previously-mentioned \texttt{max} retrieval task---which allows us a pristine environment for evaluating whether adaptive temperature leads to more useful attention heads---as well as the CLRS-Text algorithmic reasoning benchmark \citep{markeeva2024clrs}, which represents a challenging reasoning task for decoder-only Transformers, and is hence likely to require low-entropy behaviour.

\setlength{\tabcolsep}{0.7pt}

\begin{table*}[]
    \caption{Improvements observed when applying adaptive temperature on the \texttt{max} retrieval task (without changing the parameters), averaged over ten seeds. $p$-values computed using a paired $t$-test.}
    \centering
    \begin{tabular}{cccccccccccc}
        \toprule
        &\cellcolor{green!15} {\bf ID size} & \multicolumn{10}{c}{\bf Out-of-distribution sizes}\\
        {\bf Model} & \cellcolor{green!15} $16$ & $32$ & $64$ & $128$ & $256$ & $512$ & $1,024$ & $2,048$ & $4,096$ & $8,192$ & $16,384$\\
        \midrule
        Baseline & \cellcolor{green!15} {$\mathbf{98.6}\%$} & $\mathbf{97.1}\%$ & ${94.3}\%$ & $89.7\%$ & $81.3\%$ & $70.1\%$ & $53.8\%$ & $35.7\%$ & $22.6\%$ & $15.7\%$ & $12.4\%$\\
        Adaptive $\theta$ & \cellcolor{green!15} $\mathbf{98.6}\%$ & $\mathbf{97.1}\%$ & $\mathbf{94.5}\%$ & $\mathbf{89.9}\%$ & $\mathbf{82.1}\%$ & $\mathbf{72.5}\%$ & $\mathbf{57.7}\%$ & $\mathbf{39.4}\%$ & $\mathbf{24.9}\%$ & $\mathbf{17.5}\%$ & $\mathbf{14.0}\%$\\
        \midrule
        $p$-value & \cellcolor{green!15} $0.4$ & $0.4$ & $\underline{0.002}$ & $\underline{2\cdot 10^{-5}}$ & $\underline{2\cdot 10^{-4}}$ & $\underline{3\cdot 10^{-5}}$ & $\underline{10^{-4}}$ & $\underline{6\cdot 10^{-4}}$ & $\underline{0.02}$ & $\underline{10^{-3}}$ & $\underline{4\cdot10^{-3}}$\\
        \bottomrule 
    \end{tabular}
    \label{tab:temp_results}
\end{table*}

\subsection{\texttt{Max} Retrieval} For this task, we first train our single attention head architecture as described in Appendix \ref{app:experiment}; then, we evaluate it at various numbers of input items, with and without applying adaptive temperature to its sole \texttt{softmax} function call. Note that this is a ``pure'' inference time adjustment---no modifications to the learned parameters are performed! 

The results---averaged over ten seeds and with statistical significance tests applied---are summarised in Table \ref{tab:temp_results}. As is evident, applying adaptive temperature leads to a more performant retrieval head on larger inputs, with statistical significance ascertained via a paired $t$-test.

\begin{figure}
    \centering
    \includegraphics[width=\linewidth]{attentions.png}
    \includegraphics[width=\linewidth]{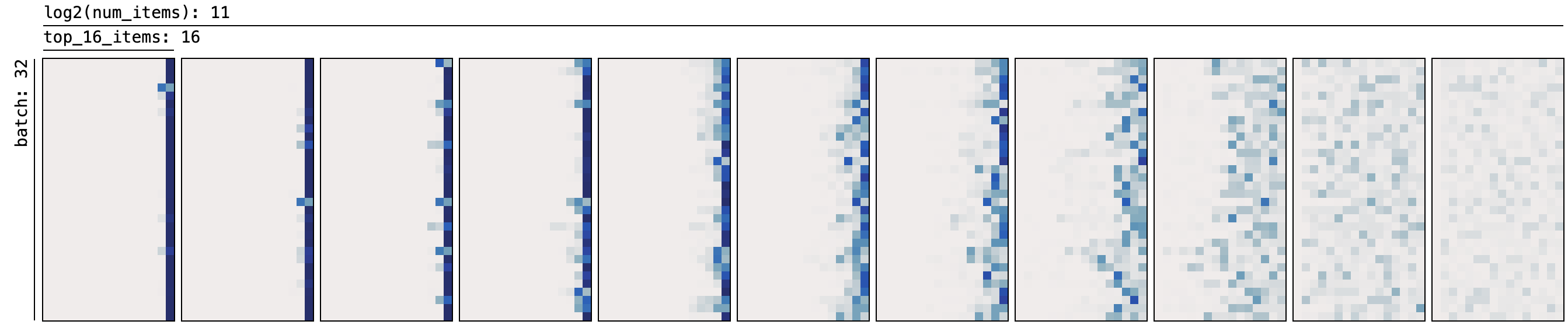}
    \caption{Visualising the attentional head for the \texttt{max} retrieval task with (\textbf{below}) and without (\textbf{above}) adaptive temperature, for the same batch and model as in Figure \ref{fig:soft_disp}. Note the increased sharpness in the coefficients, especially as the amount of items increases.}
    \label{fig:attn_temp}
\end{figure}

These results are further supplemented by a qualitative comparison of the \texttt{softmax} coefficients before and after applying the temperature adaptation. As can be seen in Figure \ref{fig:attn_temp}, our proposed adaptive temperature adaptation indeed leads to sharper coefficients on larger inputs and higher attention being directed to the desired item, even in situations where it did not receive the largest logit.

We have now successfully validated the predictions of our theory in a controlled environment. What about a more challenging benchmark with a baseline model comprising \emph{many} attentional heads?

\subsection{CLRS-Text}

\begin{figure*}
    \centering
    \includegraphics[width=\linewidth]{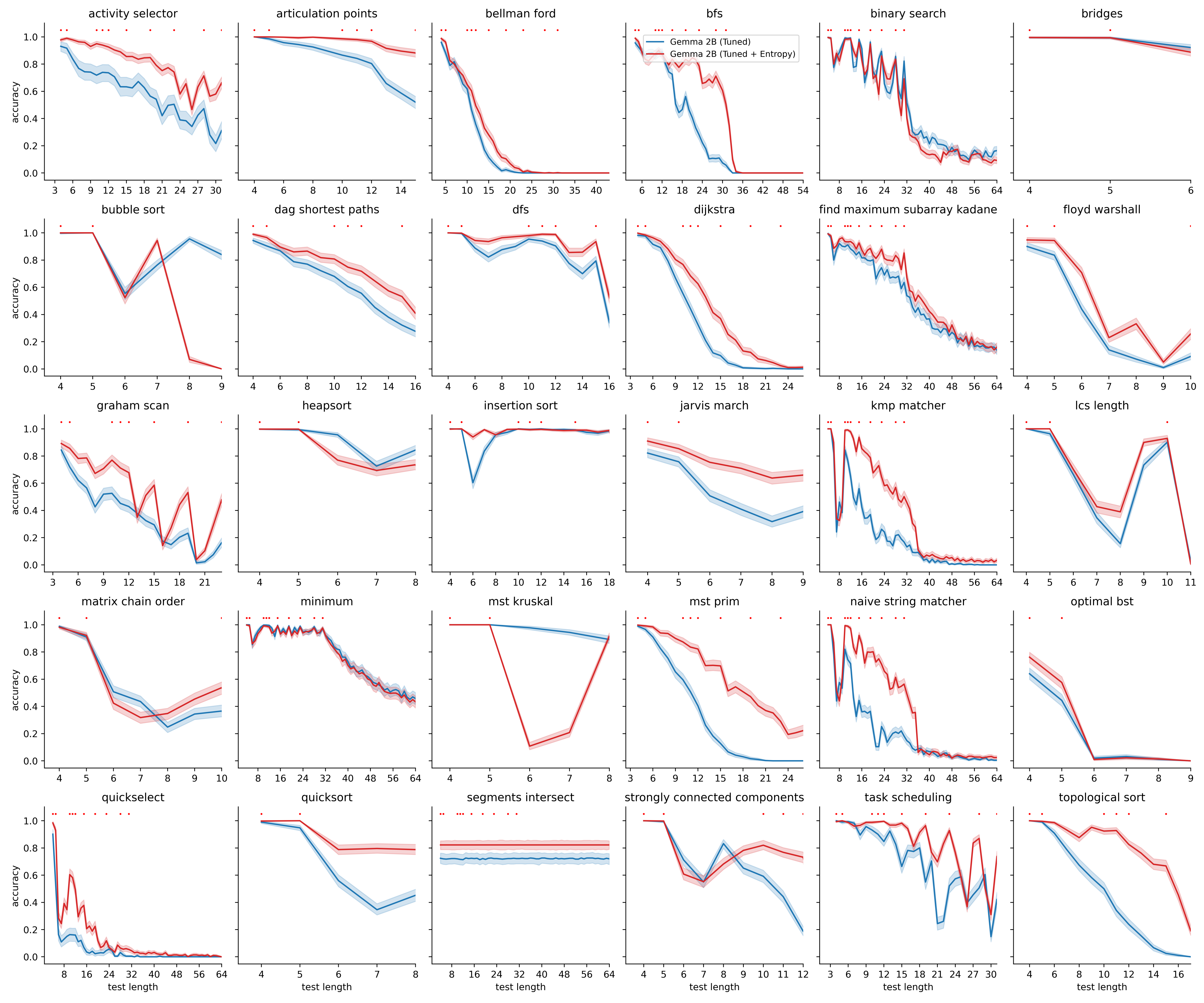}
    \caption{Resampling test results on CLRS-Text of variants of Gemma 2B, fine-tuned with and without adaptive temperature applied, on various problem sizes. Each point on the $x$ axis corresponds to a particular problem size in the corresponding algorithmic task. For example, on sorting tasks, this corresponds to the number of items being sorted; for graph tasks, it corresponds to the number of nodes in the graph. The \textcolor{echonavy}{\bf blue} curves represent the accuracy of the baseline fine-tuned Gemma 2B model, whereas the \textcolor{mymauve}{\bf red} curves represent the accuracy of that same model, fine-tuned with adaptive temperature. Both Gemma 2B variants were explicitly trained on CLRS-Text tasks---the training set sizes are denoted by red dots---and are evaluated zero-shot. Note that we limit our sample length to $2,048$ tokens, and only show performance metrics for sizes where the answer fits in this constraint.}
    \label{fig:entropic}
\end{figure*}

In this benchmark, we follow the protocol established by \citet{markeeva2024clrs} and fine-tune Gemma 2B models \citep{team2024gemma} on the thirty algorithmic execution tasks in CLRS-Text, plotting their performance profiles in- and out-of-distribution at various problem sizes.

While it may be tempting to directly re-apply our learned adaptive temperature function from Figure \ref{fig:polyfit} solely at inference time---the same way we did in the \texttt{max} retrieval experiments---this approach does not empirically work well in the CLRS-Text regime. This is due to the fact that CLRS-Text inputs are often textual representations of \emph{floating-point} numbers and therefore individual numbers often span \emph{multiple} tokens. It is therefore insufficient and inappropriate to aim for entropy levels where all the focus would be on \emph{one} token only, as was desirable in the \texttt{max} retrieval task.

One follow-up on this could be to perform exactly the same polynomial fit exercise leading up to Figure \ref{fig:polyfit}, only this time focussing on ``optimal'' values of temperature for Gemma's attentional heads. However, in this regime, we argue this exercise is substantially less trivial to do---as we are now dealing with a system spanning many attentional heads across many layers, it is not easy to even discover relevant attentional heads' behaviours, and even less so to ascertain that the model's robustness depends on those specific heads in those ways. As briefly discussed before, any such individual endeavour typically leads to a brand-new research project in mechanistic interpretability, and we do not find this to be in-scope of our paper.

That being said, there is an alternate route to make the Gemma model still benefit from our adaptive temperature module exactly as-is (i.e., with exactly the same polynomial fit as in Figure \ref{fig:polyfit}); it just has to directly \emph{learn} how to leverage it. As such, in our CLRS-Text ablation we apply adaptive temperature both during fine-tuning and at inference time. What this means is, we replace all instances of \texttt{jax.nn.softmax} within all the attentional heads of Gemma 2B with our \texttt{adaptive\_temperature\_softmax} function, both during fine-tuning of the model and during inference. This allows the model to learn how to compute key/query embeddings that can maximally exploit the temperature adaptation.

These final comparative results may be found in Figure \ref{fig:entropic}, and they demonstrate a significant advantage of the adaptive temperature-backed model on nearly all of the thirty algorithms under study. While we cannot make strong claims about where the underperformance on outlying algorithms comes from, a unifying property of some of them (Heapsort, MST Kruskal and Bubble Sort) in CLRS-Text is that they all occupy relatively large chunks of Gemma 2B's context window, which stretches further beyond the largest contexts over which the polynomial fit of our adaptive temperature was derived; this might cause unintended distribution shifts.
 
Our result indicates that, even in a complex system with many interactions between attentional heads, it is possible to extract benefits from the simple idea of dynamically adapting the temperature---and we hope our result paves the way for more involved future investigation of such approaches.

\section{Conclusions}

\emph{``Energy continuously flows from being concentrated\\
To becoming \textbf{dispersed}, spread out, wasted and useless.''}---The 2nd Law: Unsustainable, by Muse

In this paper, we have provided extensive theoretical and empirical evidence that the \texttt{softmax}---a key function in the design of modern frontier architectures---is leveraged within those architectures in a way that is fundamentally unable to sustain robust reasoning behaviours across all possible inputs. This is due to the fact its output coefficients are necessarily dispersing provided sufficient input elements. Our evidence relies on fundamental properties of the \texttt{softmax}, as well as specific popular design decisions as tokenisation, global attention, and similar.

Beyond illustrating and proving these dispersion effects, we also attempted to use our theoretical framework to propose an \emph{adaptive temperature} approach that is able---at least to a certain extent---to hold the dispersion effect at bay. It is our opinion that the favourable results we observe with adaptive temperature warrant further investigation, and indicate that such adaptive layers are a strategy worth dedicating further attention to in future work. 

We conclude by remarking, once again, that adaptive temperature is merely an \emph{ad-hoc} method and it does not escape the conclusions of our theory! The key takeaway of our paper is \emph{not} the adaptive temperature proposal; it is the fact that we find it worthwhile to more seriously invest in research of hybrid architectures that will not fully rely on the \texttt{softmax} function, at least within the confines of the assumptions of our theory. To name a few possibilities:

\begin{itemize}
    \item Any kind of unnormalised attention, such as \emph{linear} \citep{schmidhuber1992learning}, \emph{sigmoidal} \citep{ramapuram2024theory} or \emph{stick-breaking} attention \citep{tan2025scaling} does not have the dispersion issues presented here. That being said, it becomes substantially harder to meaningfully \emph{rank} items using them, see e.g. the GATv2 paper \citep{brody2022how}.
    \item Explicitly \emph{removing} excess attention could be another attractive alternative, as is robustly done in \emph{selective attention} \citep{leviathan2025selective} in a way that is capable of alleviating dispersion. The \emph{Differential Transformer} \citep{ye2025differential} follows a similar idea, though the exact way it is realised -- subtraction of two distributions \emph{after} the application of \texttt{softmax} -- may not effectively evade dispersion, assuming it already happened in the individual \texttt{softmax} calls.
    \item Similarly, forcing the attention to be \emph{hard} or \emph{local} \citep{martins2016softmax,correia-etal-2019-adaptively,peters-etal-2019-sparse,vitvitskyi2025makes} would also escape the confines of our theory. We already briefly discussed the challenges of learning with hard attention---local attention provides a very interesting alternative, but it must be stressed that ``out-of-distribution'' behaviours for certain heads may appear even at highly ``local'' scales; OOD here refers to going outside \emph{the largest problem size the head saw at training time}, \textbf{not} the largest context deployed at training time.
    \item Lastly, our key Theorem relies on the model being built out of \emph{continuous} building blocks. Inserting \emph{discontinuities} in the feedforward layers---perhaps using approaches like \citet{dudzik2024asynchronous} as inspiration---would also break the assumptions of our theory, though it comes with obvious challenges to learning at scale.
\end{itemize}

While such approaches haven't seen as much success at scale as the ``vanilla'' Transformer, we hope our results inspire future work into making them stable, especially for constructing reasoning systems.

\section*{Acknowledgements}

We would like to deeply thank Daniel Johnson, the author of Penzai \citep{johnson2024penzai}---without it, our exploratory analyses would not be nearly as fruitful. Further, we thank Alex Matthews, Andrew Dudzik and Jo\~{a}o Ara\'{u}jo for helping us with the proof of one of our propositions, and Arthur Conmy, Neel Nanda and Csaba Szepesv\'{a}ri for reviewing the paper prior to submission and numerous highly useful comments and references. Lastly, we show deep appreciation to Alex Vitvitskyi, Olga Kozlova and Larisa Markeeva, for their endless engineering support with CLRS-Text.

\section*{Impact Statement}

This paper presents a theoretical and empirical study into the numerical properties of the \texttt{softmax} function, placing into the spotlight several issues that models leveraging it will experience when generalising to longer inputs at test time. This is a setting that has clear implications to research in long context generalisation, which is an important frontier for reasoning in AI systems. Hence, any societal consequences of our work would potentially be contained within the consequences for general-purpose research in either of those established areas.


\bibliography{example_paper}
\bibliographystyle{icml2025}

\newpage
\appendix
\onecolumn
\section{Experimental Details for the Maximum Entry Retrieval Task}
\label{app:experiment}

As briefly described in the main paper, we leverage the \texttt{max} retrieval task over a single attention head as a way to empirically validate our theory, as well as assess the benefits of adaptive temperature in a controlled setting. In this section, we describe the various aspects of our experimental setup, for the purposes of clarity and reproducibility. 

\subsection{Motivation}

We deliberately focus on a \emph{single attention head} environment and a \emph{simple} selection function (\texttt{max}) to remove any confounders from our observations. 

Since we are using exactly one attention head, whatever coefficients it outputs can be directly related to the network's belief in which items are most important for the downstream prediction. This allows us to, e.g., correlate the coefficients with the ground-truth magnitude of the items.

Since we are looking for the maximal element's property, we are not requiring any complicated behaviour from the coefficients: when our target task is to approximate \texttt{max}, the \texttt{softmax} coefficients need to approximate \texttt{argmax}---which is exactly what they are designed to be a smooth approximation for. As such, this choice of target task exhibits high algorithmic alignment \citep{Xu2020What}.

\subsection{Data Generation}

Let $n$ be the number of items in the set that we wish to classify. For each item, $1\leq i\leq n$, we need to define a \emph{priority value}, which is used to select the maximal entry. We sample these values from a uniform distribution; $\rho_i\sim\mathcal{U}(0, 1)$.

We would also wish our task to be a \emph{classification} rather than \emph{regression} task, in order to leverage a more robust accuracy metric. As such, let $C$ be the desired number of classes. We can now attach to each item a class, $\kappa_i\sim \mathcal{U}\{1, \dots, C\}$, sampled uniformly at random. We assume $C=10$ fixed.

Then, for each input item, $1\leq i\leq n$, we consider its features to be $\mathbf{x}_i\in\mathbb{R}^{C+1}$ to be defined as $\mathbf{x}_i = \rho_i\|\mathrm{onehot}(\kappa_i, C)$, i.e. the concatenation of these two sampled pieces of data where $\kappa_i$ is represented as a one-hot vector.

Lastly, since we will leverage dot-product attention, we also need a \emph{query} vector. In this particular task, the query is irrelevant, and we initialise it to a random uniformly-sampled value, $q\sim\mathcal{U}(0, 1)$.

Our task is to predict, given $\{\mathbf{x}_i\}_{1\leq i\leq n}$ and $q$, the class of the maximal item, i.e., $\kappa_{\argmax_i \rho_i}$.

\subsection{Neural Network Architecture}

The neural network model is designed to be a simple set aggregation model (in the style of Deep Sets \citep{zaheer2017deep}), with a single-head dot product attention as the aggregation function.

Its equations can be summarised as follows:
\begin{align}
    \mathbf{h}_i &= \psi_x(\mathbf{x}_i)\\ \mathbf{q} &= \psi_q(q)\\
    e_i &= (\mathbf{Q}\mathbf{q})^\top(\mathbf{K}\mathbf{h}_i)\\
    \alpha_i &= \frac{\exp (e_i/\theta)}{\sum_{1\leq j\leq n}\exp(e_j/\theta)}\\
    \mathbf{z} &= \sum_{1\leq i\leq n} \alpha_i\mathbf{V}\mathbf{h}_i\\
    \mathbf{y} &= \phi(\mathbf{z})
\end{align}
Equations 9--10 prepare the embeddings of the items and query, using two-layer MLPs $\psi_x$ and $\psi_q$ using the GeLU activation function \citep{hendrycks2016gaussian} and an embedding size of $128$ dimensions. Then, a single-head dot-product attention (with query, key and value matrices $\mathbf{Q}$, $\mathbf{K}$ and $\mathbf{V}$) is executed in Equations 11--13. Lastly, the output class logits are predicted from the attended vector using a two-layer GeLU MLP, $\phi$ (Equation 14). Each component is a two-layer MLP to ensure it has universal approximation properties.

A concise implementation of our network using JAX \citep{jax2018github} and Flax \citep{flax2020github} is as follows:

\begin{Verbatim}[commandchars=\\\{\}]
\PYG{k+kn}{import} \PYG{n+nn}{jax.numpy} \PYG{k}{as} \PYG{n+nn}{jnp}
\PYG{k+kn}{from} \PYG{n+nn}{flax} \PYG{k+kn}{import} \PYG{n}{linen} \PYG{k}{as} \PYG{n}{nn}
\PYG{k+kn}{from} \PYG{n+nn}{typing} \PYG{k+kn}{import} \PYG{n}{Callable}

\PYG{k}{class} \PYG{n+nc}{Model}\PYG{p}{(}\PYG{n}{nn}\PYG{o}{.}\PYG{n}{Module}\PYG{p}{):}
  \PYG{n}{n\PYGZus{}classes}\PYG{p}{:} \PYG{n+nb}{int} \PYG{o}{=} \PYG{l+m+mi}{10}
  \PYG{n}{n\PYGZus{}feats}\PYG{p}{:} \PYG{n+nb}{int} \PYG{o}{=} \PYG{l+m+mi}{128}
  \PYG{n}{activation}\PYG{p}{:} \PYG{n}{Callable} \PYG{o}{=} \PYG{n}{nn}\PYG{o}{.}\PYG{n}{gelu}

  \PYG{n+nd}{@nn}\PYG{o}{.}\PYG{n}{compact}
  \PYG{k}{def} \PYG{n+nf+fm}{\PYGZus{}\PYGZus{}call\PYGZus{}\PYGZus{}}\PYG{p}{(}\PYG{n+nb+bp}{self}\PYG{p}{,} \PYG{n}{x}\PYG{p}{,} \PYG{n}{q}\PYG{p}{):}
    \PYG{n}{x} \PYG{o}{=} \PYG{n}{nn}\PYG{o}{.}\PYG{n}{Dense}\PYG{p}{(}\PYG{n}{features}\PYG{o}{=}\PYG{n+nb+bp}{self}\PYG{o}{.}\PYG{n}{n\PYGZus{}feats}\PYG{p}{)(}\PYG{n}{x}\PYG{p}{)}
    \PYG{n}{x} \PYG{o}{=} \PYG{n+nb+bp}{self}\PYG{o}{.}\PYG{n}{activation}\PYG{p}{(}\PYG{n}{x}\PYG{p}{)}
    \PYG{n}{x} \PYG{o}{=} \PYG{n}{nn}\PYG{o}{.}\PYG{n}{Dense}\PYG{p}{(}\PYG{n}{features}\PYG{o}{=}\PYG{n+nb+bp}{self}\PYG{o}{.}\PYG{n}{n\PYGZus{}feats}\PYG{p}{)(}\PYG{n}{x}\PYG{p}{)}
    \PYG{n}{x} \PYG{o}{=} \PYG{n+nb+bp}{self}\PYG{o}{.}\PYG{n}{activation}\PYG{p}{(}\PYG{n}{x}\PYG{p}{)}
    \PYG{n}{q} \PYG{o}{=} \PYG{n}{nn}\PYG{o}{.}\PYG{n}{Dense}\PYG{p}{(}\PYG{n}{features}\PYG{o}{=}\PYG{n+nb+bp}{self}\PYG{o}{.}\PYG{n}{n\PYGZus{}feats}\PYG{p}{)(}\PYG{n}{q}\PYG{p}{)}
    \PYG{n}{q} \PYG{o}{=} \PYG{n+nb+bp}{self}\PYG{o}{.}\PYG{n}{activation}\PYG{p}{(}\PYG{n}{q}\PYG{p}{)}
    \PYG{n}{q} \PYG{o}{=} \PYG{n}{nn}\PYG{o}{.}\PYG{n}{Dense}\PYG{p}{(}\PYG{n}{features}\PYG{o}{=}\PYG{n+nb+bp}{self}\PYG{o}{.}\PYG{n}{n\PYGZus{}feats}\PYG{p}{)(}\PYG{n}{q}\PYG{p}{)}
    \PYG{n}{x} \PYG{o}{=} \PYG{n}{nn}\PYG{o}{.}\PYG{n}{MultiHeadDotProductAttention}\PYG{p}{(}
        \PYG{n}{num\PYGZus{}heads}\PYG{o}{=}\PYG{l+m+mi}{1}\PYG{p}{,}
        \PYG{n}{qkv\PYGZus{}features}\PYG{o}{=}\PYG{n+nb+bp}{self}\PYG{o}{.}\PYG{n}{n\PYGZus{}feats}\PYG{p}{)(}
        \PYG{n}{inputs\PYGZus{}q}\PYG{o}{=}\PYG{n}{q}\PYG{p}{,}
        \PYG{n}{inputs\PYGZus{}kv}\PYG{o}{=}\PYG{n}{x}\PYG{p}{)}
    \PYG{n}{x} \PYG{o}{=} \PYG{n}{nn}\PYG{o}{.}\PYG{n}{Dense}\PYG{p}{(}\PYG{n}{features}\PYG{o}{=}\PYG{n+nb+bp}{self}\PYG{o}{.}\PYG{n}{n\PYGZus{}feats}\PYG{p}{)(}\PYG{n}{jnp}\PYG{o}{.}\PYG{n}{squeeze}\PYG{p}{(}\PYG{n}{x}\PYG{p}{,} \PYG{o}{\PYGZhy{}}\PYG{l+m+mi}{2}\PYG{p}{))}
    \PYG{n}{x} \PYG{o}{=} \PYG{n+nb+bp}{self}\PYG{o}{.}\PYG{n}{activation}\PYG{p}{(}\PYG{n}{x}\PYG{p}{)}
    \PYG{n}{x} \PYG{o}{=} \PYG{n}{nn}\PYG{o}{.}\PYG{n}{Dense}\PYG{p}{(}\PYG{n}{features}\PYG{o}{=}\PYG{n+nb+bp}{self}\PYG{o}{.}\PYG{n}{n\PYGZus{}classes}\PYG{p}{)(}\PYG{n}{x}\PYG{p}{)}
    \PYG{k}{return} \PYG{n}{x}
\end{Verbatim}

\subsection{Experimental Hyperparameters}

We train our model for $100,000$ gradient steps using the Adam optimiser \citep{DBLP:journals/corr/KingmaB14} with initial learning rate of $\eta = 0.001$. At each step, we present to the model a batch of $128$ input sets. All sets within a batch have the same size, sampled uniformly from $n\sim \mathcal{U}\{5,\dots,16\}$. The model is trained using cross-entropy, along with $L_2$ regularisation with hyperparameter $\lambda=0.001$.

The mixed-size training is a known tactic, designed to better prepare the model for distribution shifts on larger sets at inference time. Similarly, the weight decay follows the recommendation in Proposition \ref{prop:weights}, as an attempt to mitigate overfitting out-of-distribution as a byproduct of sharpening the \texttt{softmax} coefficients. 

Both methods prove to be effective in deriving a stable baseline model.

\section{Dispersion Harms Reasoning Performance}
\label{app:corollary}

While it is intuitive that complete coefficient dispersion is an undesirable event, it may not be immediately obvious that its occurrence may have any bearing on a reasoning model's predictive power.

In this Appendix, we provide several corollaries and remarks stemming from Theorem \ref{thm:disperse} that concretise specific ways in which reasoning failures will occur as a consequence of dispersion.

\begin{corollary}[Dispersion induces reasoning failures]
\label{cor:failure}
Let $\mathbf{X}^{(n)}\in\mathcal{X}^n$ be a matrix of input features for $n$ items, where $\mathcal{X}$ is a finite set of possible values. Further, assume a strict total order $<$ on the elements of $\mathcal{X}$. Assume we are solving a reasoning task to find the rank of the highest-valued row $\mathbf{x}^{(n)}_i$ in $\mathbf{X}^{(n)}$ (according to $<$), using a classifier over a trained single-head attention architecture: $g\left(\sum_{1\leq i\leq n}\alpha_i^{(n)}f\left(\mathbf{x}_i^{(n)}\right)\right)$, where $f$ and $g$ are continuous functions implemented as feedforward MLPs, and the coefficients $\alpha_i^{(n)}$ are computed using dot-product self-attention with \texttt{softmax} normalisation (as in Appendix \ref{app:experiment}). Further, assume there are no ties in the class confidences predicted by $g$ when deciding how to classify $\mathbf{X}^{(n)}$. Then, assuming any floating- or fixed-point datatype with machine epsilon $\epsilon > 0$ is used to support the architecture's data representation, it will necessarily start to make prediction errors beyond a certain number of items $n$, due to the dispersion effect.
\end{corollary}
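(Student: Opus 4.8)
\textbf{Proof plan for Corollary \ref{cor:failure}.}

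The plan is to combine Theorem \ref{thm:disperse} with a simple continuity/pigeonhole argument. First I would observe that the architecture in the statement is precisely of the form covered by Theorem \ref{thm:disperse}: the query/key maps are compositions of feedforward MLPs (and here not even any intermediate attention), acting on a finite input vocabulary $\mathcal{X}$, so the logits $e^{(n)}_i$ are bounded in some interval $[m,M]$ with $\delta = M-m$ a constant independent of $n$. Theorem \ref{thm:disperse} then gives that for any $\theta>0$ and any $\varepsilon>0$ there is an $n$ beyond which $\alpha^{(n)}_i < \varepsilon$ for \emph{all} $i$; in particular, taking $\varepsilon$ below the machine epsilon of the datatype, every attention coefficient rounds to the same value (or to zero), so the hardware cannot distinguish the coefficient on the true maximum from the coefficients on the other items.

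Next I would make the ``prediction error'' conclusion precise. Fix any particular value profile so that the maximum is at position $i^\star$; the desired output is the rank $i^\star$. Since $f$ has bounded range on the compact hull of $\mathcal{X}$ and the $\alpha^{(n)}_i$ sum to one, the attended vector $\mathbf{z}^{(n)} = \sum_i \alpha^{(n)}_i f(\mathbf{x}^{(n)}_i)$ lies in the (compact) convex hull of $\{f(\mathbf{x})\}_{\mathbf{x}\in\mathcal{X}}$, and once all coefficients collapse to the same representable value, $\mathbf{z}^{(n)}$ is \emph{computed} as the plain average $\frac1n\sum_i f(\mathbf{x}^{(n)}_i)$ regardless of which position holds the maximum. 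Hence $g(\mathbf{z}^{(n)})$ — and therefore the predicted class — is invariant under permuting which item is the maximum. But the correct label (the rank of the maximum) is \emph{not} permutation-invariant: I can exhibit two inputs with identical multiset of feature values but the maximum in different positions, forcing different correct answers while the model emits the same answer. By the no-ties assumption $g$ commits to a single class, so it is wrong on at least one of the two inputs. This establishes that errors \emph{must} occur beyond some $n$.

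The main obstacle — and the place to be careful — is the interaction between the exact-arithmetic statement of Theorem \ref{thm:disperse} and the finite-precision framing of the corollary. Theorem \ref{thm:disperse} shows $\alpha^{(n)}_i \to 0$ uniformly, but I need the slightly stronger operational claim that, in the machine datatype, \emph{all} stored coefficients become equal (so that the downstream sum genuinely loses the identity of the argmax), rather than merely small. This follows because the bound $\alpha^{(n)}_i \le \frac1n e^{\delta/\theta}$ from Lemma \ref{lem:disperse} is uniform in $i$, so once $\frac1n e^{\delta/\theta} < \epsilon_{\mathrm{mach}}$ every coefficient lies in the same ``flush-to-zero'' (or same smallest-subnormal) bucket; I would state this as the operative value of $n$. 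A secondary subtlety is ensuring the two witness inputs really are in $\mathcal{X}^n$ and really do have distinct correct ranks — this is immediate as soon as $|\mathcal{X}|\ge 2$ and $n\ge 2$, which I would note explicitly. Everything else (boundedness of logits, compactness propagation) is inherited verbatim from the proof of Theorem \ref{thm:disperse}.
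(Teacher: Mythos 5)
Your first step (bounded logits via Lemma \ref{lem:disperse}/Theorem \ref{thm:disperse}, the uniform bound $\alpha_i^{(n)}\le\frac1n e^{\delta/\theta}$, and pushing this below machine epsilon) matches the paper. The gap is in what you take the target label to be. You read ``rank of the highest-valued row'' as the \emph{position} $i^\star$ of the maximum in the sequence, and then derive a contradiction from the fact that the computed output becomes permutation-invariant after collapse while the position is not. But the paper's proof treats the task as a $K$-class problem over the vocabulary $\mathcal{X}=\{\mathbf{v}_1,\dots,\mathbf{v}_K\}$: the label is the rank, under $<$, of the maximal \emph{value} present in the input, which is a permutation-invariant (indeed multiset-level) target. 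Under that reading your two witness inputs --- identical multiset, maximum in different positions --- have the \emph{same} correct label, so your pigeonhole step yields no contradiction. Under your reading the argument proves the wrong thing: the single-head architecture of Appendix \ref{app:experiment} carries no positional information at all, so $g\bigl(\sum_i\alpha_i^{(n)}f(\mathbf{x}_i^{(n)})\bigr)$ is exactly permutation-invariant for \emph{every} $n$ (dispersed or not), and failure on a position-prediction task would occur already at $n=2$ in exact arithmetic --- which neither establishes failure ``due to the dispersion effect'' nor is consistent with the premise that the model solves the task in-distribution.

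The paper's route avoids this by comparing inputs with \emph{different value multisets} that dispersion renders indistinguishable: take $\mathbf{v}_a>\mathbf{v}_b$, and compare $\{\mathbf{v}_a,\mathbf{v}_b,\dots,\mathbf{v}_b\}$ (correct class $a$) with $\{\mathbf{v}_b,\dots,\mathbf{v}_b\}$ (correct class $b$). Once $n>\frac1\epsilon e^{\delta/\theta}$, the coefficient on $\mathbf{v}_a$ is below machine epsilon, so the attended vector is numerically $f(\mathbf{v}_b)$, and by continuity of $g$ and the no-ties assumption at $f(\mathbf{v}_b)$, the two inputs receive the same prediction; hence at least one is misclassified. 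If you want to repair your write-up, replace the permutation pair with such a multiset pair. A secondary point: your operational claim that after collapse the sum ``is computed as the plain average $\frac1n\sum_i f(\mathbf{x}_i)$'' is not what finite precision gives you (if all coefficients flush to the same bucket, the sum need not be the normalised average); the paper only needs the weaker and cleaner fact that the single sub-epsilon coefficient on $\mathbf{v}_a$ is indistinguishable from zero while the remaining mass sits on $f(\mathbf{v}_b)$.
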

\begin{proof}
Let $K$ be the size of the vocabulary $\mathcal{X} = \{\mathbf{v}_1, \dots, \mathbf{v}_K\}$. The reasoning task presented here is effectively a $K$-class classification problem, predicting the maximum rank in a set of values from $\mathcal{X}$. Any prediction of the architecture must be of the form $g\left(\sum_{1\leq j\leq K} \beta_j f\left(\mathbf{v}_j\right)\right)$, with the constraints that $\beta_j\geq 0$, $\sum_{1\leq j\leq K}\beta_j = 1$ and $\beta_j=0$ if $\mathbf{v}_j\notin \mathbf{X}^{(n)}$.

Now, consider two specific points $\mathbf{v}_a$ and $\mathbf{v}_b$ such that $\mathbf{v}_a > \mathbf{v}_b$. The architecture, if trained properly, must classify $g(f(\mathbf{v}_a))$ into the $a$ class, and $g(f(\mathbf{v}_b))$ into the $b$ class.

Let $\mathbf{X}^{(n)}$ be an input matrix formed such that $\mathbf{x}_1^{(n)} = \mathbf{v}_a$ and $\mathbf{x}_i^{(n)} = \mathbf{v}_b$ for all $1 < i \leq n$. For such an input, the desired output class is $a$, and the prediction must be of the form $g\left(\alpha_1^{(n)}f(\mathbf{v}_a) + \left(1 - \alpha_1^{(n)}\right)f(\mathbf{v}_b)\right)$. 

Since the input features come from a fixed vocabulary and are processed only using feedforward networks and self-attention layers, we can leverage the argument in Theorem \ref{thm:disperse} to conclude that there will be a fixed \emph{spread} in the trained architecture, $\delta$, and further that $\alpha_i\leq\frac{1}{n}\exp\frac{\delta}{\theta}$ for all $i$.

Using this we can see that, when $n > \frac{1}{\epsilon}\exp\frac{\delta}{\theta}$,  it must hold that $\alpha_1^{(n)} < \epsilon$. At this point, the value of $\alpha_1^{(n)}$ will be indistinguishable from zero, and the weighted sum will reduce to $g(f(\mathbf{v}_b))$, due to the assumed continuity of $g$ around $f(\mathbf{v}_b)$.

Hence, by previous assumptions, and by the assumption that there are no ties in the class logits in $g(f(\mathbf{v}_b))$\footnote{This assumption is important in the case that $g(f(\mathbf{v}_b))$ gives equal logits to classes $a$ and $b$. As this is a boundary condition for the classifier, if it occurred exactly on $f(\mathbf{v}_b)$, we would not be able to guarantee that any two sets mapped to $f(\mathbf{v}_b)$ will be classified identically without sacrificing local continuity around $f(\mathbf{v}_b)$. Note that, due to floating-point rounding errors, this assumption is \emph{rarely} broken in modern deep classifiers.}, at least one of the following must be true once dispersion occurs:
\begin{itemize}
    \item The input $\{\mathbf{v}_a, \mathbf{v}_b, \mathbf{v}_b, \dots, \mathbf{v}_b\}$ of sufficiently large size will be misclassified into class $b$;
    \item The input $\{\mathbf{v}_b, \dots, \mathbf{v}_b\}$ (for any size) will be misclassified.
\end{itemize}
In either case, the architecture had to have made an error.
\end{proof}
While Corollary \ref{cor:failure} concerns single attention heads, note that we can leverage the setting of Theorem \ref{thm:disperse} to prove that such failures will occur in deep Transformers as well. We sketch this intuition:

\begin{remark}\label{rem:alltrans}
Given the same task as in Corollary \ref{cor:failure}, using a deep Transformer architecture as described in Theorem \ref{thm:disperse}, dispersion in its attentional layers is sufficient to cause misclassifications to occur. To see why, first, assume that the models have no residual connections. The arguments for why such architectures must misclassify are subtly different depending on the Transformer model:
\begin{itemize}
    \item For BERT-style Transformers, since all attention heads are global, after one dispersed layer, any sufficiently large set $\{\mathbf{v}_a, \mathbf{v}_b, \dots, \mathbf{v}_b\}$ will have identical embeddings to a set $\{\mathbf{v}_b, \dots, \mathbf{v}_b\}$ of the same size. After this, it is impossible to classify them differently.
    \item For GPT-style Transformers, to simplify the argument, we assume the $\mathbf{v}_a$ element is at the end of the input: $\{\mathbf{v}_b, \dots, \mathbf{v}_b, \mathbf{v}_a\}$. In this setting, only the final token's attention head will receive the features from $\mathbf{v}_a$. If it disperses, this set will once again be indistinguishable from a set $\{\mathbf{v}_b, \dots, \mathbf{v}_b\}$ of the same size. This argument is inspired by \citet{barbero2024transformers}.
\end{itemize}
Residual connections \citep{he2016deep} allow for preserving the information contained in $\mathbf{v}_a$ even across dispersed layers. However, as we have assumed all heads attending over $\mathbf{v}_a$ have dispersed, no subsequent layer will be able to meaningfully integrate this information across the set, and eventually the computation will hit the final layers' attentional heads, where the final embeddings will once again be indistinguishable across these two different sets.
\end{remark}

We note that the only condition on the coefficients necessary for this breakdown to occur is that they decay towards zero---the failure on sets of the kind $\{\mathbf{v}_a, \mathbf{v}_b, \mathbf{v}_b, \dots, \mathbf{v}_b\}$ is \emph{not} prevented even if $\alpha_1^{(n)}$ decays substantially more slowly than the other coefficients!

\begin{remark}
If we assume a dispersion setting where 
\begin{equation*}
    \alpha_i^{(n)} = \begin{cases}
    \Theta\left(\frac{\log n}{n}\right) & i = 1\\
    \Theta\left(\frac{1}{n}\right) & 1 < i \leq n\\
    \end{cases}
\end{equation*}
The failure described by Corollary \ref{cor:failure} still applies, following exactly the same proof, i.e. eventually $\alpha_1^{(n)} < \epsilon$ for any machine epsilon value $\epsilon > 0$. Note that, as per Theorem \ref{thm:disperse}, this situation is impossible in vocabulary-based Transformer architectures.
\end{remark}

\section{How does Dispersion Interact with Depth?}
\label{app:depthd}

While Theorem \ref{thm:disperse} concludes that dispersion must eventually affect all global attention heads in Transformer architectures over vocabularies, not much is said about how rapidly the dispersion must affect heads at various depths.

Intuitively, if dispersion occurs at a particular layer, it will cause the outputs of the dispersed attention heads to converge to the average of all value vectors. This convergence, in turn, minimises the \emph{spread} of logits, $\delta$, that the subsequent layer will experience. As shown by Lemma \ref{lem:disperse}, the value of the spread directly controls at which sizes dispersion will occur.

Using this argument, we can show that in BERT-style Transformers without residual connections, a complete dispersion of all heads in a particular layer leads \emph{all} subsequent layers' attention heads to immediately disperse.

\begin{remark}
Let $\mathbf{H}^{(n)} = \{\mathbf{h}_i^{(n)}\}_{1\leq i\leq n}$ be the input node embeddings for an intermediate layer of a BERT-style Transformer without residual connections. If all of this layer's attention heads have dispersed on that input, i.e. $\alpha^{(n)}_{ij} < \epsilon$ where $\epsilon$ is the machine epsilon, then \emph{all} of that layer's output node embeddings will be equal to the average embedding, $\tilde{\mathbf{h}}_i^{(n)} = \frac{1}{n}\sum_{1\leq j\leq n}\mathbf{V}\mathbf{h}_j^{(n)}$. Since these constitute the inputs for the next layer's attention heads, we can conclude that all of the next layer's key and query vectors will be identical, namely (for any feedforward layer $f$):
\begin{equation*}
\tilde{\mathbf{k}}_i^{(n)} = \mathbf{K}'f\left(\frac{1}{n}\sum_{1\leq j\leq n}\mathbf{V}\mathbf{h}_j^{(n)}\right) \qquad \qquad 
\tilde{\mathbf{q}}_i^{(n)} = \mathbf{Q}'f\left(\frac{1}{n}\sum_{1\leq j\leq n}\mathbf{V}\mathbf{h}_j^{(n)}\right)
\end{equation*}
As such, all logits of such a layer will themselves be equal to
\begin{equation*}
    \tilde{e}_{ij} = \left(\mathbf{Q}'f\left(\frac{1}{n}\sum_{1\leq j\leq n}\mathbf{V}\mathbf{h}_j^{(n)}\right)\right)^\top\left(\mathbf{K}'f\left(\frac{1}{n}\sum_{1\leq j\leq n}\mathbf{V}\mathbf{h}_j^{(n)}\right)\right)
\end{equation*}
and hence, the spread will converge to $\tilde{\delta} = 0$. Given Lemma \ref{lem:disperse}, such a layer can only compute averages for any input size $n$, which is equivalent behaviour to full dispersion. That is, dispersion in a layer implies that \textbf{all} subsequent layers will output embeddings equivalent to fully dispersed ones.
\end{remark}

Note that, if we introduce residual connections in BERT-style Transformers, or leverage GPT-style Transformers, these kinds of conclusions are no longer applicable. This is because residual connections, as well as the more localised attention heads in GPT-style models, ensure that not all token embeddings will converge to the average embedding (even under dispersion). And whenever the output token embeddings of an attentional layer are not fully converged, any intermediate transformations (such as the $\mathbf{K}$ and $\mathbf{Q}$ matrices) can re-amplify $\delta$ to less dispersed levels (see also Proposition \ref{prop:weights}).

Note this does not mean that any global attentional layer of Transformers over finite token vocabularies will escape dispersion---Theorem \ref{thm:disperse} proves it is inevitable---it only means that we cannot tie the exact moment a particular layer's heads will disperse to a preceding layer's dispersion event. But the dispersion of a layer will certainly play a direct part in reducing the $\delta$ value of subsequent layers, and this may well accelerate dispersion in subsequent layers.

\section{Proof of Proposition \ref{prop:weights}, with Numerical Validation}
\label{app:proposition}

{\bf Proposition \ref{prop:weights}} (Sharpness in Transformers necessitates large weights){\bf .} \emph{Let $\mathbf{e}^{(n)} \in \mathbb{R}^n$ be a collection of $n$ logits, computed using a dot product attention mechanism; i.e. $e^{(n)}_k = \left \langle \mathbf{Q}\mathbf{y}, \mathbf{K}\mathbf{x}_k \right \rangle$, where $\mathbf{y}\in\mathbb{R}^m$ is a query vector and $\mathbf{Q},\mathbf{K}\in\mathbb{R}^{m'\times m}$ are parameters. Let $\delta = \max\limits_{1\leq i\leq n} e^{(n)}_i - \min\limits_{1\leq j\leq n} e^{(n)}_j$ be their maximum difference. Then $\delta$ is upper bounded as:
\begin{equation*}
    \delta \leq 2 \sigma_\mathrm{max}^{(Q)} \sigma_\mathrm{max}^{(K)} \| \mathbf{y} \| \max_{1\leq i\leq n} \| \mathbf{x}_i \|
\end{equation*}
where $\sigma_\mathrm{max}^{(Q)}, \sigma_\mathrm{max}^{(K)}\in\mathbb{R}$ are the largest singular values of $\mathbf{Q}$ and $\mathbf{K}$.}
\begin{proof}
We start by showing that the largest singular values of $\mathbf{Q}$ and $\mathbf{K}$ determine the maximum stretch due to that matrix acting on $\mathbf{x}\in\mathbb{R}^m$. More precisely, we wish to show:
\begin{equation*}
    \| \mathbf{Q} \mathbf{x} \| \leq \sigma^{(Q)}_\mathrm{max}  \|\mathbf{x}\| \qquad \qquad   \| \mathbf{K} \mathbf{x} \| \leq \sigma^{(K)}_\mathrm{max}  \|\mathbf{x}\|
\end{equation*}
where $\|\cdot\|$ is the Euclidean norm. Since both inequalities have the same form, we focus on $\mathbf{Q}$ w.l.o.g. Many of these statements can be derived from linear algebra textbooks \citep{axler2015linear}. However, the proofs are short enough that we re-derive them here for clarity.

Consider the singular value decomposition (SVD) $\mathbf{Q} = \mathbf{U}\boldsymbol{\Sigma}\mathbf{V}^\top$, where $\boldsymbol{\Sigma}$ is a rectangular diagonal matrix of singular values $\sigma^{(Q)}_i\in\mathbb{R}$. As $\mathbf{U}$ and $\mathbf{V}$ are orthogonal, $\|\mathbf{U}\mathbf{x}\|=\|\mathbf{V}\mathbf{x}\|=\|\mathbf{x}\|$. Therefore, $\|\mathbf{Q}\mathbf{x}\| = \| \mathbf{U}\boldsymbol{\Sigma}\mathbf{V}^\top\mathbf{x} \| = \| \boldsymbol{\Sigma}\mathbf{v} \| $, where $\mathbf{v} = \mathbf{V}^\top\mathbf{x}$, meaning that $\| \mathbf{v} \| = \| \mathbf{x} \|$. Then we derive:
\begin{equation*}
   \|\boldsymbol{\Sigma}\mathbf{v} \| = \|\mathbf{Q}\mathbf{x}\| = \sqrt{\sum_i \left(\sigma^{(Q)}_iv_i\right)^2} \leq \sigma^{(Q)}_\mathrm{max} \sqrt{\sum_i v_i^2} = \sigma^{(Q)}_\mathrm{max} \| \mathbf{x} \|
\end{equation*}
We now note that
\begin{equation*}
    e_k^{(n)} = \left \langle \mathbf{Q}\mathbf{y}, \mathbf{K}\mathbf{x}_k \right \rangle = \| \mathbf{Q}\mathbf{y} \| \| \mathbf{K}\mathbf{x}_k \| \cos\theta
\end{equation*}
with $\theta$ the angle between the arguments of the inner product. We can now bound $e_k^{(n)}$ from above:
\begin{equation*}
    e_k^{(n)} \leq \| \mathbf{Q}\mathbf{y} \| \| \mathbf{K}\mathbf{x}_k \| \leq \sigma_\mathrm{max}^{(Q)} \sigma_\mathrm{max}^{(K)} \| \mathbf{y} \| \| \mathbf{x}_k \|
\end{equation*}
with $\sigma_\mathrm{max}^{(Q)}, \sigma_\mathrm{max}^{(K)}$ being the maximum singular value of $\mathbf{Q}$ and $\mathbf{K}$, respectively, and where the last step comes from the inequality shown above. Similarly, we obtain a lower bound, yielding: 
\begin{equation*}
    - \sigma_\mathrm{max}^{(Q)} \sigma_\mathrm{max}^{(K)} \| \mathbf{y} \| \| \mathbf{x}_k \| \leq e_k^{(n)} \leq \sigma_\mathrm{max}^{(Q)} \sigma_\mathrm{max}^{(K)} \| \mathbf{y} \| \| \mathbf{x}_k \|
\end{equation*}
This gives us the desired upper bound for $\delta$:
\begin{align*}
    \delta & = \max_{1\leq i\leq n} e^{(n)}_i - \min_{1\leq j\leq n} e^{(n)}_j \\
    &\leq \max_{1\leq i\leq n}\sigma_\mathrm{max}^{(Q)} \sigma_\mathrm{max}^{(K)} \| \mathbf{y} \| \| \mathbf{x}_i \| - \min_{1\leq j\leq n} - \sigma_\mathrm{max}^{(Q)} \sigma_\mathrm{max}^{(K)} \| \mathbf{y} \| \| \mathbf{x}_j \| \\
    &= \sigma_\mathrm{max}^{(Q)} \sigma_\mathrm{max}^{(K)} \| \mathbf{y} \| \max_{1\leq i\leq n}\| \mathbf{x}_i \| + \sigma_\mathrm{max}^{(Q)} \sigma_\mathrm{max}^{(K)} \| \mathbf{y} \| \max_{1\leq j\leq n} \| \mathbf{x}_j \| \\
    &= 2 \sigma_\mathrm{max}^{(Q)} \sigma_\mathrm{max}^{(K)} \| \mathbf{y} \| \max_{1\leq i\leq n} \| \mathbf{x}_i \|
\end{align*}
completing the proof.
\end{proof}

\begin{figure}
    \centering
    \includegraphics[width=0.75\linewidth]{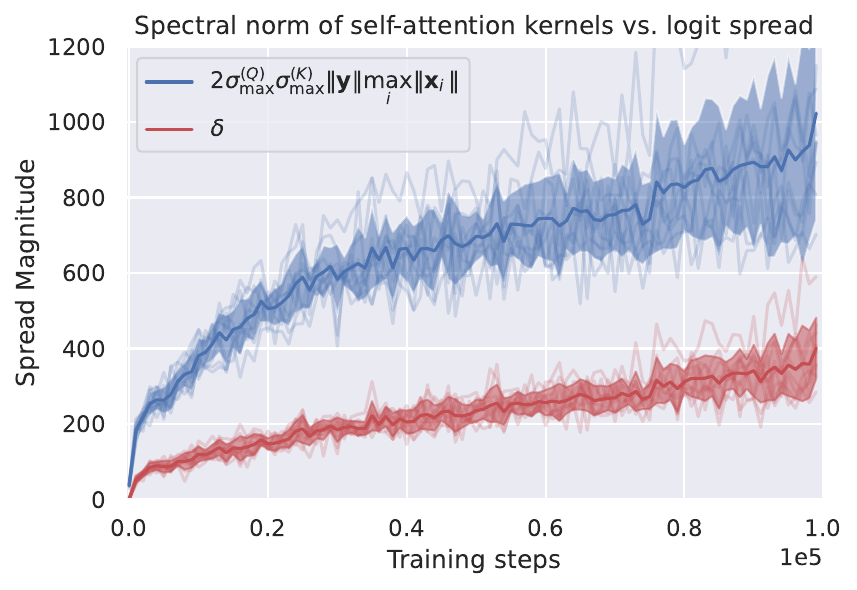}
    \caption{A plot of the logit spread, $\delta$, against its upper bound value predicted by Proposition \ref{prop:weights}, $2\sigma_\mathrm{max}^{(Q)}\sigma_\mathrm{max}^{(K)}\|\mathbf{y}\|\max_i\|\mathbf{x}_i\|$, for the single-head attentional experiment described in Appendix \ref{app:experiment}, with statistics computed across ten seeds. This numerically validates Proposition \ref{prop:weights}.}
    \label{fig:numerical_validation}
\end{figure}

We remark that Proposition \ref{prop:weights} lends itself to simple numerical verification as well. Accordingly, in Figure \ref{fig:numerical_validation}, we visualise the evolution of the logit spread, as well as its predicted upper bound, as our single-head attentional model from Appendix \ref{app:experiment} is trained for increasing numbers of steps.

Indeed, we find that the upper bound is valid, and reveal a key mechanism in which our single-head architecture gradually learns to sharpen its attention: the logit spread grows with training time, but so does the norm of the relevant vectors and parameter matrices (in spite of our weight decay loss).

\section{Proof of Proposition \ref{prop:entropy}}
\label{app:entropy}

{\bf Proposition \ref{prop:entropy}} (Decreasing temperature decreases entropy){\bf .} 
\emph{
    Let $\mathbf{e}^{(n)}\in\mathbb{R}^n$ be a collection of $n$ logits. Consider the Boltzmann distribution over these $n$ items, $p_i\propto\exp(-\beta e_i^{(n)})$ for $\beta\in\mathbb{R}$, and let $H = -\sum_i p_i\log p_i$ be its Shannon entropy. Then, as $\beta$'s magnitude increases, $H$ must monotonically decrease.}
\begin{proof}
    We start by briefly acknowledging the extremal values of $\beta$: at $\beta=0$ (i.e., $\theta\rightarrow\infty$), all logits are weighed equally, hence $p_i=\mathcal{U}(n)$ are uniform, and entropy is maximised. Similarly, at $\beta\rightarrow\pm\infty$ (i.e., $\theta = 0$), either the minimum or the maximum logit is given a probability of $1$, leading to a distribution with minimal (zero) entropy.
    
    Now, consider the partition function $Z=\sum_i\exp(-\beta e_i^{(n)})$, such that $p_i = \frac{\exp(-\beta e_i^{(n)})}{Z}$. We will take derivatives of $\log Z$ with respect to $\beta$. Starting with the first derivative:
    \begin{equation*}
        \frac{d}{d\beta}\log Z = \frac{1}{Z}\sum_i -e_i^{(n)}\exp(-\beta e_i^{(n)}) = -\sum_{i} e_i^{(n)}p_i = -\mathbb{E}_{i\sim p_i}(e^{(n)}_i)
    \end{equation*}
    we recover the expected logit value sampled under the distribution. Now we differentiate again:
    \begin{align*}
        \frac{d^2}{d\beta^2}\log Z &= -\frac{d}{d\beta}\sum_i e_i^{(n)}p_i\\
        &= -\sum_i e_i^{(n)}\frac{d}{d\beta} \frac{\exp(-\beta e_i^{(n)})}{Z}\\
        &= -\sum_i e_i^{(n)}\frac{-e_i^{(n)}\exp(-\beta e_i^{(n)})Z - \exp(-\beta e_i^{(n)})\sum_j -e_j^{(n)}\exp(-\beta e_j^{(n)})}{Z^2}\\
        &= \sum_i (e_i^{(n)})^2 \frac{\exp(-\beta e_i^{(n)})}{Z} - \sum_j e_j^{(n)} \frac{\exp(-\beta e_j^{(n)})}{Z} \frac{\sum_k e_k^{(n)}\exp(-\beta e_k^{(n)})}{Z}\\
        &= \sum_i (e_i^{(n)})^2 p_i - \sum_j e_j^{(n)} p_j \sum_k e_k^{(n)} p_k\\
        &= \mathbb{E}_{i\sim p_i}((e_i^{(n)})^2) - \mathbb{E}_{i\sim p_i} (e_i^{(n)})^2 = \mathrm{Var}_{i\sim p_i}(e_i^{(n)}) 
    \end{align*}
and we recover the variance of the expected logit value.

Now we turn our attention to the entropy formula:
\begin{align*}
    H = -\sum_i p_i\log p_i &= -\sum_i p_i (\log \exp (-\beta e_i^{(n)}) - \log Z)\\ &= \sum_i p_i \log Z - \sum_j -\beta e_j^{(n)}p_j\\ &= \log Z + \beta\mathbb{E}_{i\sim p_i}(e_i^{(n)}) = \log Z - \beta\frac{d}{d\beta}\log Z
\end{align*}
To check the monotonicity of $H$ as $\beta$ varies, we now take the derivative of this expression w.r.t. $\beta$:
\begin{equation*}
    \frac{dH}{d\beta} = \frac{d}{d\beta}\log Z - \frac{d}{d\beta}\log Z - \beta\frac{d^2}{d\beta^2}\log Z = -\beta\frac{d^2}{d\beta^2}\log Z = -\beta\mathrm{Var}_{i\sim p_i}(e_i^{(n)})
\end{equation*}
Since variance can never be negative, we find that $\frac{dH}{d\beta} \leq 0$ when $\beta \geq 0$, and $-\frac{dH}{d\beta} \leq 0$ when $\beta \leq 0$. As such, as the magnitude $|\beta|$ grows, the value of $H$ must monotonically decrease.
\end{proof}

\section{Derivation of the Adaptive Temperature Scheme}\label{app:scheme}

In this Appendix we expand on the description of our adaptive temperature scheme in a way that provides a detailed step-by-step overview of the individual steps taken to arrive at the final JAX algorithm presented in Figure \ref{fig:jax}.

The core principle of our approach is to adapt the $\theta$ parameter within $\mathtt{softmax}_\theta$ in a way that is mindful of the \emph{entropy} of its inputs; that is, we first compute:
\begin{equation}\hat{\theta} = f(H(\mathtt{softmax}_\theta(\mathbf{e})))
\end{equation}
where $\mathbf{e}\in\mathbb{R}^n$ is the vector of logits inputted into \texttt{softmax}, $\theta$ is the initial temperature (here, we set $\theta=1$ for simplicity), $H : [0,1]^n\rightarrow \mathbb{R}$ is the Shannon entropy of the distribution emitted by $\mathtt{softmax}_\theta(\mathbf{e})$:
\begin{equation}
    H(\mathbf{p}) = -\sum_{i} p_i\log p_i
\end{equation}
and $f : \mathbb{R}\rightarrow\mathbb{R}_{\geq 0}$ is a (learned) function mapping the calculated entropy to an adapted temperature value, $\hat{\theta}$ (which should be nonnegative). Once computed, our sampling scheme returns $\mathtt{softmax}_{\hat{\theta}}(\mathbf{e})$ instead of $\mathtt{softmax}_{\theta}(\mathbf{e})$ as the final distribution. Therefore, the only remaining important element to understand is how the function $f$ is constructed.

While one could approach the construction of $f$ using a complex learned function, for the purposes of this work we focus on first learning a simple inverse-polynomial degree-4 approximator, $\hat{f}$:
\begin{equation}
    \hat{f}(H) = \left(\sum_{i=0}^4 a_iH^i\right)^{-1}
\end{equation}
In order to derive the coefficients $a_i$, we need a dataset of $(H, 1/\theta_\mathrm{opt})$ values, mapping the observed entropy in a particular sample to an optimal choice of temperature scaling, $\theta_\mathrm{opt}$. From such a dataset, one can use standard functions (such as \texttt{np.polyfit}) to fit the coefficients.

We have a straightforward way of computing $H$ for any particular sampled training example -- simply record attentional logits at every Transformer layer and compute the Shannon entropy. How can we compute the optimal temperature value for this example (at least in the case of our single-head set Transformer from Appendix \ref{app:experiment})? It should be the value that would place the highest probability possible on the item we are looking for -- in this case, the item with the maximal value. 

For starters, note that, for $\theta_\mathrm{opt}$ to even be meaningful in this case, the item with the maximal value must \emph{not} have the largest logit already. In such cases it is easy to see the ``optimal'' temperature is zero, leading to an unwieldy target value of $+\infty$ when learning $\hat{f}$. Hence, we will discard all samples where this is the case.

For all other samples, we know that the probability of the target item will be $0$ at $\theta = 0$ (as it will be overwhelmed by the maximal logit), and it will be $1/n$ at $\theta\rightarrow\infty$. Somewhere in between these values lies the optimal choice. Rather than using elaborate methods to search for $\theta_\mathrm{opt}$, we perform a simple grid search -- attempting all values of $\theta$ between $0$ and $10$, in increments of $0.1$. Empirically, we find that for all samples we cared about, $\theta_\mathrm{opt}$ was definitely in this range -- and we record the $\theta$ that led to the highest observed probability as our estimate of $\theta_\mathrm{opt}$.

Finally, we would not wish to ``overcorrect'' the temperature if the attention head is already confident (entropy below a fixed threshold---we chose $0.5$) or, given the target of our work towards sharpness, if the suggested temperature scale would disperse the coefficients further (i.e. if it would result in a temperature greater than $1$).

All together, we recover the following final form of our algorithm, implemented by Figure \ref{fig:jax}:
\begin{equation}
f(H) = \begin{cases}
    1 & H \leq 0.5\\
    \min\left(\hat{f}(H), 1\right) & H > 0.5
\end{cases}
\end{equation}

\section{An Algorithm for Streaming Attentional Entropy}
\label{app:stream}

Computing our proposed adaptive temperature requires computing the entropy of the attentional coefficients. A na\"{i}ve algorithm for doing so requires fully materialising the $\alpha_{ij}$ entries of the attention coefficient matrix, which requires $O(n^2)$ memory and poses scalability concerns. Fortunately, there exists an \emph{online} algorithm for computing the entropy that is not FLOP/s efficient but does not leverage any additional memory, allowing for a linear-space attention implementation in conjunction with Flash Attention \citep{dao2022flashattention}. We present one such algorithm in this section. We have successfully implemented this algorithm and numerically verified that its outputs match the expected adaptive temperature amounts, allowing us to deploy layers with large context windows (up to $131,072$ tokens) on a single NVIDIA A100 node.

In order to compute the adaptive temperature, we need to first compute the attentional coefficient entropy for each row of the attentional matrix. For convenience, let us define the exponentiated logit of token $i$'s attention over token $j$, taking into account only the first $1\leq N\leq n$ items:
\begin{equation*}
\lambda_{ij}^{(N)} = \exp\left({\mathbf{q}_i^\top\mathbf{k}_j - \max_{k < N}(\mathbf{q}_i^\top\mathbf{k}_k)}\right)
\end{equation*}
where $\mathbf{q}_i$ and $\mathbf{k}_i$  the query and key vectors, respectively, for token $i$.

Now, we can rearrange the terms of the expression for the \emph{entropy}, $H_i^{(N)}$, of each row of the corresponding matrix of attentional coefficients, taking into account the first $N$ items, in a form that will be more favourable for streaming:
\begin{align*}
H_i^{(N)} & = H\left(\left\{\frac{\lambda_{ij}^{(N)}}{\sum_k \lambda_{ik}^{(N)}}\right\}_{1\leq j\leq n}\right) \\
 & = \sum_j \frac{\lambda_{ij}^{(N)}}{\sum_k \lambda_{ik}^{(N)}} \log\frac{\lambda_{ij}^{(N)}}{\sum_k \lambda_{ik}^{(N)}} \\ 
 & = \sum_j \frac{\lambda_{ij}^{(N)}}{\sum_k \lambda_{ik}^{(N)}} \left(\log\lambda_{ij}^{(N)} - \log\left(\sum_k \lambda_{ik}^{(N)}\right)\right) \\
 & = \sum_j \frac{\lambda_{ij}^{(N)}}{\sum_k \lambda_{ik}^{(N)}} \log \lambda_{ij}^{(N)} -  \sum_j \frac{\lambda_{ij}^{(N)}}{\sum_k \lambda_{ik}^{(N)}} \log\left(\sum_k \lambda_{ik}^{(N)}\right) \\
 & =  \frac{\sum_j \lambda_{ij}^{(N)} \log \lambda_{ij}^{(N)}}{\sum_k \lambda_{ik}^{(N)}} -  \frac{\sum_j \lambda_{ij}^{(N)}}{\sum_k \lambda_{ik}^{(N)}} \log\left(\sum_k \lambda_{ik}^{(N)}\right) \\
 & =  \frac{\sum_j \lambda_{ij}^{(N)} \log\lambda_{ij}^{(N)}}{\sum_k \lambda_{ik}^{(N)}} - \log\left(\sum_k \lambda_{ik}^{(N)}\right)
\end{align*}
Next, we define two cumulative quantities:
\begin{equation*}
    \Lambda_i^{(N)} := \sum_{j < N} \lambda_{ij}^{(N)} \qquad \qquad m_i^{(N)} := \max_{j < N} \mathbf{q}_i^\top\mathbf{k}_j
\end{equation*}
which allow us to further analyse the $\sum_{j} \lambda_{ij}^{(N)} \log\lambda_{ij}^{(N)}$ term as follows:
\begin{align*}
  \sum_{j < N} \lambda_{ij}^{(N)} \log\lambda_{ij}^{(N)} &= \sum_{j < N} \exp\left(\mathbf{q}_i^\top\mathbf{k}_j - \max_{k} \mathbf{q}_i^\top\mathbf{k}_k\right) \log\exp\left(\mathbf{q}_i^\top\mathbf{k}_j - \max_{k} \mathbf{q}_i^\top\mathbf{k}_k\right) \\
        &= \sum_{j < N} \lambda_{ij}^{(N)} \left(\mathbf{q}_i^\top\mathbf{k}_j - m_i^{(N)}\right) \\
        &= \sum_{j<N} \lambda_{ij}^{(N)} \mathbf{q}_i^\top \mathbf{k}_j - m_i^{(N)} \Lambda_i^{(N)}
\end{align*}
Now we remark that we can incrementally compute $\Lambda_i^{(N)}$ using the following iterative formula, leveraging the same concepts as Flash Attention \citep{dao2022flashattention}:
\begin{align*}
    \Lambda_i^{(N)} &:= \sum_{j<N} \lambda_{ij}^{(N)} =  \sum_{j<N} \exp\left(\mathbf{q}_i^\top\mathbf{k}_j - m_i^{(N)}\right) \\
    \Lambda_i^{(N+1)} &= \Lambda_i^{(N)} \exp\left(m_i^{(N)} - m_i^{(N+1)}\right) + \lambda_{iN}^{(N)}
\end{align*}
and we can incrementally compute the remaining term, $\mathcal{K}_i^{(N)} = \sum_{j<N} \lambda_{ij}^{(N)} \mathbf{q}_i^\top\mathbf{k}_j$, using the following iterative formula:
\begin{align*}
    \mathcal{K}_i^{(N)} &:= \sum_{j<N} \lambda_{ij}^{(N)} \mathbf{q}_i^\top\mathbf{k}_j =  \sum_{j<N} \exp\left(\mathbf{q}_i^\top\mathbf{k}_j - m_i^{(N)}\right) \mathbf{q}_i^\top\mathbf{k}_j\\
    \mathcal{K}_i^{(N+1)} &= \mathcal{K}_i^{(N)}  \exp\left(m_i^{(N)} - m_i^{(N+1)}\right) + \lambda_{iN}^{(N)}\mathbf{q}_i^\top\mathbf{k}_N
\end{align*}
So our final result in terms of $\Lambda_i^{(n)}$ and $\mathcal{K}_i^{(n)}$ (fully streamed across all $n$ items) is:
\begin{align*}
H_i^{(n)} &=\frac{\mathcal{K}_i^{(n)} - m_i^{(n)} \Lambda_i^{(n)}}{\Lambda_i^{(n)}} - \log\Lambda_i^{(n)} \\
 &= \frac{\mathcal{K}_i^{(n)}}{\Lambda_i^{(n)}} - m_i^{(n)} - \log\Lambda_i^{(n)}
\end{align*}
This expression can be computed with $O(n)$ memory, as we never have to materialise an entire matrix of coefficients. Under this implementation, adaptive temperature can easily scale to large context windows (which we have validated empirically up to $131,072$ tokens).

\end{document}